\documentclass[journal]{IEEEtran}

\usepackage{amsmath, amsthm, amssymb, mathtools, bm}
\usepackage{lmodern}
\usepackage{xcolor}
\usepackage{graphicx}
\usepackage{algorithm}
\usepackage{algpseudocode}
\usepackage{enumerate}
\usepackage{array}
\usepackage{booktabs}
\usepackage{hyperref}
\usepackage{zref-clever}
\zcsetup{cap}
\zcRefTypeSetup{figure}{
    Name-sg = Fig.,
    name-sg = Fig.,
    Name-pl = Figs.,
    name-pl = Figs.
}
\zcRefTypeSetup{definition}{
    Name-sg = Def.,
    name-sg = Def.,
    Name-pl = Defs.,
    name-pl = Defs.
}
\zcRefTypeSetup{equation}{
    Name-sg = Eq.,
    name-sg = Eq.,
    Name-pl = Eqs.,
    name-pl = Eqs.
}
\zcRefTypeSetup{appendix}{
    Name-sg = Appendix,
    name-sg = appendix,
    Name-pl = Appendices,
    name-pl = appendices
}
\usepackage{xspace}
\usepackage{microtype} 

\usepackage{tikz}
\usetikzlibrary{shapes.geometric, arrows, positioning, shapes.symbols}

\newtheorem{definition}{Definition}
\newtheorem{lemma}{Lemma}
\newtheorem{theorem}{Theorem}
\newtheorem{corollary}{Corollary}

\newtheorem{remark}{Remark}

\newcommand{\br}[1]{\left(#1\right)}

\newcommand{\norm}[1]{\left\lVert{#1}\right\rVert}

\DeclareMathOperator*{\Exp}{\mathbf{E}}

\DeclareMathOperator{\clip}{clip}

\newcommand{\R}{\mathbb{R}} 
\newcommand{\Hil}{\mathcal{H}} 
\newcommand{\algname}{{DECO-EF}\xspace}
\newcommand{\yesfeat}{\textcolor{green!50!black}{\ensuremath{\checkmark}}}
\newcommand{\nofeat}{\textcolor{red!70!black}{\ensuremath{\times}}}
\newcommand{\alphastoch}{\alpha^{\mathrm{stoch}}}
\newcommand{\etastoch}{\bar{\eta}}
\newcommand{\Pperp}{P_{\perp}}

\title{Decentralized Parameter-Free Online Learning with Compressed Gossip}

\author{Tomas Ortega, \IEEEmembership{Graduate Student Member, IEEE}, and Hamid Jafarkhani, \IEEEmembership{Fellow, IEEE}
    \thanks{
        Authors are with the Center for Pervasive Communications \& Computing and  EECS Department, University of California, Irvine, Irvine, CA 92697 USA (e-mail: \{tomaso, hamidj\}@uci.edu).
        This work was supported in part by the NSF Award ECCS-2207457.
}
}

\begin{document}
\bstctlcite{IEEEtran:max5names}
\maketitle


\begin{abstract}
    We study decentralized online convex optimization when agents communicate over a graph and messages may be compressed.
    Classical decentralized online methods typically require learning-rate choices that depend on the horizon, comparator scale, or other problem parameters, while compressed communication introduces additional disagreement that must be controlled.
    We propose \algname{} (DEcentralized COin-betting with Error Feedback), a decentralized parameter-free online learning algorithm that combines coin-betting predictions with compressed difference-based gossip.
    Each agent maintains a clean accumulated state and a compressed tracker, and communicates only compressed state differences during gossip steps.
    The method is parameter-free in the online-learning sense: it does not tune to the horizon, the comparator norm, or the learning rate.
    We prove expected comparator-adaptive network-regret bounds for \algname{} under compressed communication.
    To the best of our knowledge, this gives the first expected sublinear network-regret guarantees for parameter-free decentralized online learning under compressed communication.
\end{abstract}
\begin{IEEEkeywords}
    Online Learning, Parameter-Free, Decentralized Optimization, Compression, Error Feedback, Gossip.
\end{IEEEkeywords}

\section{Introduction}
\IEEEPARstart{N}{etworked} signal processing and learning systems increasingly make decisions from data streams instead of stored data sets.
Online Convex Optimization (OCO) provides a standard adversarial model for such sequential decisions: at each round, a learner chooses a point before observing the current convex loss and performance is measured by regret with respect to the best fixed comparator in hindsight~\cite{hazan_introduction_2023,shalev-shwartz_online_2011}.

Two practical issues arise in this context.
First, classical OCO methods such as online gradient descent (OGD) and follow-the-regularized-leader (FTRL) require learning-rate choices that depend on the horizon, the comparator norm, or loss-scale parameters~\cite{hazan_introduction_2023}.
These quantities are usually unknown before the stream is observed, and a mis-specified learning rate can cause catastrophic losses~\cite{orabona2020icml}.
Parameter-free OCO addresses this issue by replacing tuned learning rates with comparator-adaptive regret guarantees~\cite{orabona_modern_2023}.
A parameter-free guarantee is especially useful in streaming and edge deployments: the same algorithm can be run without a horizon estimate, without a comparator-scale guess, and without an expensive learning-rate sweep for every graph, device class, or data stream.
A prominent mechanism is coin betting, which proves regret by lower-bounding a sequential wealth process and then converting this wealth certificate through a Fenchel conjugate~\cite{mcmahan_no-regret_2012,mcmahan_unconstrained_2014,orabona_coin_2016,cutkosky_online_2017,van_der_hoeven_comparator-adaptive_2020}.
For unbounded domains with Lipschitz losses, these methods obtain the optimal dependence on the comparator norm up to logarithmic factors~\cite{mcmahan_unconstrained_2014}.

Second, modern learning systems are often decentralized.
Each agent observes only a local loss stream, communicates only with neighbors in a graph, and nevertheless contributes decisions that should be good for the network as a whole~\cite{rabbat_distributed_2004,johansson_simple_2007,nedic_rate_2007,nedic_distributed_2018,cesa2020cooperative,cesa2021cooperative}.
Communication is also lossy in many signal-processing and edge-learning deployments.
The difficulty is that parameter-free learning and compressed decentralized communication interact in a nontrivial way.
Coin-betting predictions are nonlinear functions of accumulated gradients, so small disagreement in the accumulated state can be amplified by the prediction map.
At the same time, biased compressors create tracking error that ordinary gossip does not remove unless the communication protocol explicitly corrects it.

This paper studies whether comparator-adaptive OCO guarantees can be retained when all communication is decentralized and compressed.
We consider \(N\) agents on a connected graph.
At Round \(t\), Agent \(n\) predicts \(x_{n,t}\), observes a local convex loss \(\ell_{n,t}\), and exchanges compressed messages with its neighbors.
The objective is to minimize the \emph{network regret}, defined as follows.
For a comparator \(u\) in a Hilbert space \(\Hil\), the network regret after $T$ OCO rounds is
\begin{equation}
    R_T^{\mathrm{net}}(u)
    =
    \sum_{t=0}^{T-1}\frac{1}{N}\sum_{n=1}^N \bar\ell_t(x_{n,t})
    -
    \sum_{t=0}^{T-1}\bar\ell_t(u),
    \label{eq:network-regret}
\end{equation}
where
\begin{equation}
    \bar\ell_t(x)=\frac1N\sum_{n=1}^N\ell_{n,t}(x)
    \label{eq:network-loss}
\end{equation}
is defined as the \emph{network loss} at Round \(t\).
The network regret evaluates all local predictions under the averaged network loss and compares them with one fixed comparator \(u\).
This is the natural collaborative benchmark for a network that learns a shared decision from distributed local observations.\footnote{%
    We take the name from~\cite{achddou_distributed_2024}, related work uses \emph{collective} regret~\cite{hsieh_multi-agent_2022}.}

We propose \algname{} (DEcentralized COin-betting with Error Feedback), a decentralized parameter-free OCO algorithm with compressed gossip.
The term parameter-free is used in the standard OCO sense: the algorithm does not tune a learning rate to the horizon or comparator norm.
The graph-dependent gossip step size and the number of compressed gossip substeps remain communication parameters that depend only on the graph and compressor properties, not on the horizon, loss sequence, or comparator.

The main contributions are as follows.
\begin{itemize}
    \item We introduce \algname{}, a family of decentralized coin-betting algorithms in which agents communicate compressed state differences.
    \item We prove a network-regret decomposition that separates the centralized Hilbert-space coin-betting term from the prediction disagreement caused by decentralization and compression, and establish expected compressed-gossip tracking bounds for randomized mean-square compressors.
    \item We prove that \algname{} with Krichevsky-Trofimov (KT) and shifted-exponential coin-betting potentials achieves expected \(\widetilde O(\max\{1,\norm u\}\sqrt T)\) network-regret bounds under a linear compressed-gossip schedule.
    \item We validate our results on both synthetic and real data, illustrating the trade-offs between communication and accuracy for \algname{}.
\end{itemize}
To the best of our knowledge, we present the first algorithm family to achieve expected sublinear network-regret guarantees for parameter-free decentralized online learning with compressed synchronous gossip messages.


The remainder of the paper is organized as follows.
\zcref{sec:related-work} reviews related work.
\zcref{sec:model-setup} defines the communication assumptions.
\zcref{sec:algorithm} describes the high-level algorithm logic, prediction rule, and \algname{} update.
\zcref{sec:main-results} states the main regret guarantees.
\zcref{sec:analysis} provides the proof idea for the main regret guarantees, with technical details in the appendices.
\zcref{sec:operational-details} records implementation choices such as the gossip schedule, step size, and compressor.
\zcref{sec:experiments} describes the empirical evaluation and \zcref{sec:conclusion} concludes the paper.

\section{Related Work}\label{sec:related-work}
\paragraph{Online convex optimization and parameter-free learning}
OCO formalizes the worst-case sequential prediction through regret minimization~\cite{hazan_introduction_2023,shalev-shwartz_online_2011}.
Classical algorithms such as OGD, mirror descent, dual averaging, and FTRL achieve sublinear regret with appropriately tuned step sizes, but their guarantees depend on parameters such as the horizon, gradient scale, domain diameter, or comparator norm.
Parameter-free online learning removes this tuning by proving comparator-adaptive bounds.
Coin-betting methods are a central approach: they maintain a wealth process, select predictions as bets, and obtain regret by duality from the wealth lower bound~\cite{mcmahan_no-regret_2012,mcmahan_unconstrained_2014,orabona_coin_2016,cutkosky_online_2017,van_der_hoeven_comparator-adaptive_2020}.
Our analysis uses this machinery, but the regret is for a network of learners rather than a single agent.

\paragraph{Decentralized online optimization}
Distributed and decentralized optimization over graphs is a long-standing topic in signal processing, control, and machine learning~\cite{rabbat_distributed_2004,johansson_simple_2007,duchi_dual_2012,nedic_decentralized_2015,nedic_network_2018}.
In online variants, agents observe time-varying losses and communicate through graph-supported averaging or dual-averaging steps, and the resulting regret bounds depend on the spectral properties of the graph and problem-specific parameters~\cite{hosseini_online_2013,nedic_rate_2007}.
Recent parameter-free distributed online-learning results address related but different models, including delayed information, activated agents, and the availability of random agents~\cite{hoeven_distributed_2021,achddou_distributed_2024,hsieh_multi-agent_2022}.
The present work focuses on the synchronous and fully decentralized setting in which every agent acts each round, communication is graph-local, and the benchmark is network regret.

\paragraph{Compressed decentralized online learning}
Communication-efficient distributed learning uses quantization, sparsification, and other lossy compressors to reduce message sizes~\cite{qsgd,sparsifiedSGD,advances_open_problems}.
Because biased compressors such as Top-\(k\) can accumulate error, error-feedback (EF) methods track residuals or differences rather than compressing full updates~\cite{error_feedback,EF21,unbiased_horvath,Ortega_Jafarkhani_2023_asynch,Ortega_Jafarkhani_2024a}.
In decentralized optimization, several methods combine this difference-tracking idea with compressed proxies and gossip dynamics to obtain consensus and optimization convergence under lossy communication~\cite{koloskova_decentralized_2019,koloskova_decentralized_2020}.
Related online-learning work has studied collaborative regret under limited communication, including quantized online multi-kernel learning and gossiped online prediction~\cite{shen2021distributed,Ortega_Jafarkhani_2023_gossiped}, as well as compressed federated or distributed optimization variants~\cite{Ortega_Huang_Li_Jafarkhani_2024,ortega2025communicationcompressiondistributedlearning}.
The closest predecessor of the present work is the uncompressed DECO (DEcentralized COin-betting) algorithm~\cite{Ortega_Jafarkhani_2025}, which gives parameter-free decentralized OCO with comparator-adaptive network-regret
guarantees under exact communication.
Here we add compressed difference-tracking gossip and the corresponding tracking analysis, obtaining expected comparator-adaptive network-regret bounds under compressed communication.
Parameter-free compressed algorithms were previously obtained for the weaker joint-regret criterion~\cite{hoeven_distributed_2021,hsieh_multi-agent_2022}, while other communication-constrained online-learning methods do not provide comparator-adaptive network regret~\cite{Yang_Yang_Jiang_Zhang_2025,CB_2023}.
\algname{} combines these directions by importing the difference-tracked compressed gossip into adversarial decentralized OCO, preserving the comparator-adaptive network regret while allowing compressed messages.

\zcref{tab:algorithm-comparison} summarizes the related work.
The rows differ in the feedback model, synchrony, and objective.
Its purpose is to isolate the three properties combined here: decentralized communication, compressed gossip, and comparator-adaptive sublinear network regret.

\begin{table*}[t]
    \caption{Positioning of representative algorithm families.
        Comparator-adaptive (CA) denotes explicit dependence on \(\norm{u}\).}%
    \label{tab:algorithm-comparison}
    \centering
    \setlength{\tabcolsep}{3.5pt}
    \renewcommand{\arraystretch}{1.14}
    \resizebox{\textwidth}{!}{%
        \begin{tabular}{@{}lccccl@{}} 
            \toprule
            Method family & Decentralized & Compressed gossip & CA         & Sublinear Network Regret & Representative guarantee   \\
            \midrule
            OGD/FTRL~\cite{hazan_introduction_2023,shalev-shwartz_online_2011}
                          & \nofeat{}     & \nofeat{}         & \nofeat{}  & \yesfeat{}               & tuned network regret       \\
            Coin betting~\cite{mcmahan_no-regret_2012,orabona_coin_2016}
                          & \nofeat{}     & \nofeat{}         & \yesfeat{} & N/A                      & single-agent CA regret     \\
            Distributed CA OCO~\cite{hoeven_distributed_2021}
                          & \yesfeat{}    & \yesfeat{}        & \yesfeat{} & \nofeat{}                & joint regret               \\
            Decentralized OGD~\cite{johansson_simple_2007,nedic_rate_2007,nedic_decentralized_2015}
                          & \yesfeat{}    & \nofeat{}         & \nofeat{}  & \yesfeat{}               & tuned network regret       \\
            EF-style compression~\cite{koloskova_decentralized_2019,EF21}
                          & \yesfeat{}    & \yesfeat{}        & \nofeat{}  & N/A                      & optimization convergence   \\
            DECO~\cite{Ortega_Jafarkhani_2025}
                          & \yesfeat{}    & \nofeat{}         & \yesfeat{} & \yesfeat{}               & CA network regret          \\
            \algname{}
                          & \yesfeat{}    & \yesfeat{}        & \yesfeat{} & \yesfeat{}               & expected CA network regret \\
            \bottomrule
        \end{tabular}%
    } %
\end{table*}

The network-regret notion in \zcref{eq:network-regret} is a global-performance criterion.
It differs from per-agent regret, which certifies each node only against its own observed loss stream, and from consensus error, which does not by itself measure predictive performance.
As noted in~\cite{achddou_distributed_2024}, it is also different from the average of per-agent regrets, which is a weaker criterion that can be sublinear even if the network regret is not.
The uncompressed DECO algorithm~\cite{Ortega_Jafarkhani_2025} is recovered from \algname{} by taking the identity compressor and the exact-message communication.

\section{Setup}\label{sec:model-setup}
This section provides the communication graph and compressor definitions we need to set up the proofs.


\subsection{Communication Graph and Compression}\label{subsec:communication-compression}
Agents communicate on a fixed undirected communication graph \(\mathcal G=([N],E)\), which we assume to be connected.
A valid gossip matrix for \(\mathcal G\) is a nonnegative, symmetric, doubly stochastic matrix \(W\) whose off-diagonal support is exactly the edge set:
\begin{equation*}
    W_{ij}>0 \text{ if } \{i,j\}\in E,\quad
    W_{ij}=0 \text{ if } i\neq j \text{ and } \{i,j\}\notin E .
\end{equation*}
We also require positive self-weights \(W_{ii}>0\).
For \(Z\in\Hil^N\), we define the product network norm
\begin{equation}
    \norm{Z}_{2,\Hil}
    =
    \left(\sum_{n=1}^N\norm{Z_n}^2\right)^{1/2},
    \label{eq:product-network-norm}
\end{equation}
and we define
\begin{equation*}
    {(WZ)}_n=\sum_{j=1}^N W_{nj}Z_j .
\end{equation*}
For such a gossip matrix, the usual finite-dimensional spectral-gap theorem~\cite{boyd_randomized_2006} provides a zero-mean contraction factor \(\rho\in[0,1)\). 
Equivalently, the spectral gap \(\mu=1-\rho\) satisfies \(0<\mu\leq1\), with \(\rho=0\) for exact averaging.
The Hilbert lift of \(W\) is nonexpansive in this product \(\ell_2\) norm and contracts the zero-mean subspace:
\begin{equation}
    \begin{aligned}
        \norm{WZ}_{2,\Hil}
         & \leq \norm{Z}_{2,\Hil}
         &                            & \text{for all }Z\in\Hil^N,     \\
        \norm{WZ}_{2,\Hil}
         & \leq \rho\norm{Z}_{2,\Hil}
         &                            & \text{if }\sum_{n=1}^N Z_n=0 .
    \end{aligned}
\end{equation}

Messages sent in our graph are compressed.

\begin{definition}[Compressor]\label{def:compressor}
    A randomized compressor on \(\Hil\) consists of a sample space \(\Omega\), a map \(\mathcal C:\Omega\times\Hil\to\Hil\), and a parameter \(0\leq\alphastoch<1\) such that for every input \(v\in\Hil\),
    \begin{equation}
        \Exp_\omega\norm{\mathcal C(\omega,v)-v}^2
        \leq \alphastoch\norm v^2 .
        \label{eq:compressor-alpha-stoch}
    \end{equation}
\end{definition}

Fix a compressor in the sense of \zcref{def:compressor}.
\zcref{tab:compressors} lists representative mean-square residual coefficients \(\alphastoch\) for compressors with inputs in \(v\in\R^d\).
For the low-rank approximation entry, \(v\) is reshaped to an \(a\times b\) matrix with \(d=ab\), \(p=\min\{a,b\}\), and the compressor keeps the best rank-\(r\) approximation with \(r<p\).
For a \(b\)-bit uniform scalar quantization, one bit stores the sign and the remaining \(b-1\) bits store the magnitude: after normalizing by \(\norm{v}_{\infty}\), each coordinate magnitude is rounded to one of the \(2^{b-1}\) equally spaced values in \([0,1]\).
Top-\(k\) sparsification keeps the \(k\) largest-magnitude coordinates and replaces the rest with zeros. The random-\(k\) sparsification chooses the \(k\) coordinates to keep uniformly at random.

\begin{table}[t]
    \caption{Representative mean-square compressor coefficients.}%
    \label{tab:compressors}
    \centering
    \small
    \setlength{\tabcolsep}{2.3pt}
    \renewcommand{\arraystretch}{1.15}
    \begin{tabular}{@{}lc@{}}
        \toprule
        Compressor                                       & \(\alphastoch\)                \\
        \midrule
        Identity~\cite{Ortega_Jafarkhani_2025}           & \(0\)                          \\
        Top-\(k\)~\cite{sparsifiedSGD,qsgd}              & \(1-k/d\)                      \\
        Low-rank, rank \(r<p\)~\cite{vogels2019powersgd} & \(1-r/p\)                      \\
        \(b\)-bit quant.~\cite{qsgd}                     & \(\frac{d}{4{(2^{b-1}-1)}^2}\) \\
        Random-\(k\), unscaled~\cite{sparsifiedSGD}      & \(1-k/d\)                      \\
        \bottomrule
    \end{tabular}
\end{table}

\section{Algorithm}\label{sec:algorithm}
This section describes the algorithmic idea and the concrete \algname{} update analyzed in the paper.

\subsection{High-Level Idea}\label{subsec:high-level-idea}
\algname{} combines parameter-free coin-betting prediction with compressed error-feedback gossip.
A complete description of a round of update appears in \zcref{alg:deco-ef}, and its high-level system model is sketched in \zcref{fig:system-model}.
\begin{figure}[t]
    \centering
    \begin{tikzpicture}[
        font=\scriptsize,
        agent/.style={circle, draw=black, fill=gray!10, align=center,
                minimum size=1.05cm, inner sep=1pt},
        block/.style={rectangle, draw=black, rounded corners=2pt,
                fill=blue!7, align=center, inner sep=3pt},
        update/.style={rectangle, draw=black, rounded corners=2pt,
                fill=orange!10, align=center, inner sep=3pt},
        target/.style={rectangle, draw=black, rounded corners=2pt,
                fill=green!10, align=center, inner sep=3pt},
        msg/.style={->, thick, >=stealth},
        gossip/.style={<->, thick, dotted, >=stealth}
    ]
    \node[block] (losses) at (0,1.5)
    {local loss streams\\\(\ell_{n,t}\) and \(g_{n,t}\)};

    \node[agent] (a1) at (-2.5,0) {Agent\\\(1\)};
    \node[agent] (a2) at (0,0) {Agent\\\(n\)};
    \node[agent] (a3) at (2.5,0) {Agent\\\(N\)};

    \draw[gossip] (a1) -- (a2);
    \draw[gossip] (a2) -- (a3);
    \draw[gossip] (a1) to[bend right=20]
    node[below] {} (a3);

    \draw[msg] (losses.south west) -- (a1.north);
    \draw[msg] (losses.south) -- (a2.north);
    \draw[msg] (losses.south east) -- (a3.north);

    \node[update] (states) at (0,-1.35)
    {local state and tracker\\
        local prediction \(x_{n,t}\)};
    \draw[msg] (a1) -- (states);
    \draw[msg] (a2) -- (states);
    \draw[msg] (a3) -- (states);

    \node[target] (regret) at (0,-2.75)
    {network loss average \(\bar\ell_t\)\\
        compare with one comparator \(u\)};
    \draw[msg] (states) -- (regret);
\end{tikzpicture}
    \caption{System model for \algname{}\@.
        Agents make local predictions, observe local convex losses, and exchange compressed state differences over the communication graph.
        The analysis evaluates the averaged network loss against a single comparator \(u\).}\label{fig:system-model}
\end{figure}
At Round $t$, Agent \(n\) calculates a local prediction \(x_{n,t}\) from its accumulated local  state, which summarizes its past feedback loss and previous gossip exchanges, using the coin-betting map defined in \zcref{subsec:betting-potentials}.
After observing a local subgradient, the agent updates a clean state and then performs \(q(t)\) compressed gossip substeps.
Each communication substep sends the compressed difference between the clean state and a tracker, rather than compressing the clean state itself.
This design preserves the clean network average while the tracker absorbs compression error through repeated difference updates.
The \emph{communication step size} \(\gamma\) is a communication parameter that should be chosen according to the graph and the compressor, but it does not depend on the loss sequence or comparator.
A larger \(\gamma\) results in faster consensus but can amplify the compression error, while a smaller \(\gamma\) reduces the compression error but slows consensus.
The gossip steps reduce clean-state disagreement, while the compressed difference updates keep the trackers close enough to the clean states for the Lyapunov contraction used in the proof.

The remaining subsections define the prediction map and the state update.

\subsection{Coin-Betting Potentials and Predictions}\label{subsec:betting-potentials}
The algorithm is parameter-free in the usual OCO sense: it does not choose a learning rate as a function of \(T\), or \(\norm u\).
It has a coin-betting endowment \(\varepsilon>0\), which fixes the initial wealth scale and communication parameters determined by the graph and compressor.
The prediction rule converts accumulated signed gradients into decisions through a coin-betting potential~\cite{orabona_coin_2016,orabona_modern_2023,Ortega_Jafarkhani_2025}.

\begin{definition}[Coin-Betting Potential]\label{def:potential}
    Let \(\varepsilon>0\).
    A family of functions \(F_t\), indexed by integers \(t\geq0\), with domains \(F_t:(-a_t,a_t)\to\R\) is a coin-betting potential if:
    \begin{enumerate}[(a)]
        \item \(F_0(0)=\varepsilon\) and \(a_t>t\).
        \item Each \(F_t\) is even, positive in \((-a_t,a_t)\), and \(\log F_t\) is convex in \((-a_t,a_t)\).
        \item For every \(t\geq1\), \(|x|\leq t-1\), and \(|c|\leq1\),
              \begin{equation}
                  \br{1+c\beta_t(x)}F_{t-1}(x)\geq F_t(x+c),
              \end{equation}
              where
              \begin{equation}
                  \beta_t(x)
                  =
                  \frac{F_t(x+1)-F_t(x-1)}
                  {F_t(x+1)+F_t(x-1)}.
              \end{equation}
    \end{enumerate}
    It is \emph{excellent} if, additionally, for every \(t\) and every \(0<R<a_t\), the square-root lift
    \begin{equation}
        z\mapsto F_t(\sqrt z)
        \label{eq:sqrt-lift}
    \end{equation}
    is convex on \([0,R^2]\).
    This extra condition is needed to lift the scalar coin betting to a Hilbert space.
\end{definition}

\begin{remark}
    In~\cite{orabona_coin_2016}, the excellence condition required \(F_t\) to be twice-differentiable and \(xF_t''(x)\geq F_t'(x)\).
    When potentials are twice-differentiable, both conditions are equivalent.
    However, we argue that the square-root formulation is less restrictive since it only requires convexity of~\eqref{eq:sqrt-lift}, not differentiability.
\end{remark}

The \emph{coin-betting function}~\cite{Ortega_Jafarkhani_2025} converts wealth into a one-dimensional wager through
\begin{equation}
    h_t(x)=\beta_t(x)F_{t-1}(x).
\end{equation}
For a Hilbert-space state \(G\in\Hil\), we use a radial version of \(h_t\), namely
\begin{equation}
    \operatorname{Bet}_t(G)=
    \begin{cases}
        0,                                & G=0,    \\
        \dfrac{h_t(\norm{G})}{\norm{G}}G, & G\neq0.
    \end{cases}
    \label{eq:betting-prediction}
\end{equation}
The potential formulas are only used on their reachable domain.
To keep the prediction input in that domain, the algorithm clips the norm of the state at radius \(t\) while preserving its direction:
\begin{equation}
    \clip_t(G)=
    \begin{cases}
        G,                    & \norm{G}\leq t, \\
        \dfrac{t}{\norm{G}}G, & \norm{G}>t.
    \end{cases}
    \label{eq:state-clip}
\end{equation}
Thus, \(\norm{\clip_t(G)}\leq t\) and \(\clip_t\) is the metric projection onto the closed ball of radius \(t\).
In the one-dimensional case, norm clipping is exactly the usual scalar clipping map \(\clip_t(z)=\min\{t,\max\{-t,z\}\}\).
Both maps are \(1\)-Lipschitz.
This clipping affects only the prediction query \(\operatorname{Bet}_{t+1}(\clip_t(G_{n,t}))\), the clean state used for gradient accumulation and compressed gossip remains the unclipped \(G_{n,t}\).
To give explicit examples of potentials, we introduce the two that we use throughout the paper:
\begin{align}
    F_t^{\mathrm{KT}}(x)
     & =
    \varepsilon\,
    \frac{2^t\Gamma\br{\frac{t+1}{2}+\frac{x}{2}}
        \Gamma\br{\frac{t+1}{2}-\frac{x}{2}}}
    {\pi\,t!},
    \label{eq:kt-potential} \\
    F_t^{\mathrm{SE}}(x)
     & =
    \frac{\varepsilon}{\sqrt{t+1}}
    \exp\br{\frac{x^2}{2(t+1)}}.
    \label{eq:shifted-exp-potential}
\end{align}
Here, \(\mathrm{KT}\) abbreviates Krichevsky-Trofimov and \(\mathrm{SE}\) abbreviates shifted-exponential.
The verification that these choices are excellent coin-betting potentials follows the standard KT and exponential-potential calculations\footnote{%
    The SE potential in \zcref{eq:shifted-exp-potential} is a shifted version of the original exponential potential~\cite{orabona_coin_2016}, which was defined as \(F_t(x)=\varepsilon\exp(x^2/(2t))\) for \(t\geq1\) and \(F_0(0)=\varepsilon\).
    However, the reader might notice that the unshifted exponential potential is not a coin-betting potential according to \zcref{def:potential}, since condition (c) fails at \(t=1\), \(x=0\), and any \(c\neq 0\).
    Indeed, \(\beta_1(0)=0\), so the required inequality would imply \(\varepsilon \geq \varepsilon \exp(c^2/2)\), which is false for \(c\neq0\).}, see~\cite{Ortega_Jafarkhani_2025,orabona_coin_2016}.

\subsection{\algname{} State and Update}\label{subsec:deco-ef}
The \algname{} update gossips compressed difference messages inside every communication substep.
Each agent stores two vectors:
\begin{itemize}
    \item \(G_{n,t}\): the true local accumulated state after the consensus correction;
    \item \(\widehat{G}_{n,t}\): the compressed tracker shared through gossip.
\end{itemize}
Both are initialized to zero.

At Round \(t\), Agent \(n\) applies the following update with \(\delta_{n,t}=-g_{n,t}\).
The communication step size \(0<\gamma\leq1\) controls how aggressively agents mix compressed tracker information, and the gossip schedule \(q:\mathbb N\to\mathbb N\) specifies how many compressed communication substeps are performed after the local gradient observation in Round \(t\).

\begin{algorithm}[t]
    \caption{\algname{}\@: one round at Agent \(n\)}\label{alg:deco-ef}
    \begin{algorithmic}[1]
        \Require{} State \((G_{n,t},\widehat{G}_{n,t})\), potential \(F\), gossip matrix \(W\), step size \(\gamma\), gossip count \(q(t)\).
        \State{} Predict \(x_{n,t}=\operatorname{Bet}_{t+1}(\clip_t(G_{n,t}))\).
        \State{} Observe \(g_{n,t}\in\partial \ell_{n,t}(x_{n,t})\) and set \(\delta_{n,t}=-g_{n,t}\).
        \State{} Local movement: \(Z_{n,t}^{0}=G_{n,t}+\delta_{n,t}\) and \(\widehat Z_{n,t}^{0}=\widehat G_{n,t}\).
        \For{\(k=0,\dots,q(t)-1\)}
        \State{} Difference payload: \(v_{n,t}^{k}=Z_{n,t}^{k}-\widehat Z_{n,t}^{k}\).
        \State{} Compress and broadcast to neighbors:
        \begin{equation*}
            m_{n,t}^{k}=\mathcal{C}(\omega_{n,t}^{k},v_{n,t}^{k}) .
        \end{equation*}
        \State{} Tracker update: \(\widehat Z_{n,t}^{k+1}=\widehat Z_{n,t}^{k}+m_{n,t}^{k}\).
        \State{} Relaxed gossip update:
        \begin{equation*}
            Z_{n,t}^{k+1}
            =
            Z_{n,t}^{k}
            +
            \gamma\sum_{j=1}^N W_{nj}
            \br{\widehat Z_{j,t}^{k+1}-\widehat Z_{n,t}^{k+1}}.
        \end{equation*}
        \EndFor%
        \State{} State update: \(G_{n,t+1}=Z_{n,t}^{q(t)}\) and \(\widehat G_{n,t+1}=\widehat Z_{n,t}^{q(t)}\).
    \end{algorithmic}
\end{algorithm}

The communication model is synchronous and reliable.
At Substep \(k\), Node \(n\) computes one compressed payload \(m_{n,t}^k\) and broadcasts the same payload to all neighbors with \(W_{jn}>0\).
Receivers update their local copy of Node \(n\)'s tracker by the same increment.
Equivalently, the matrix update is the shared-tracker view after all messages in that substep have arrived.
The form of the last line is important: since \(W\) is doubly stochastic,
\begin{equation*}
    \sum_{n=1}^N\sum_{j=1}^N
    W_{nj}\br{\widehat Z_{j,t}^{k+1}-\widehat Z_{n,t}^{k+1}}=0,
\end{equation*}
so every compressed gossip substep preserves the average of the clean state \(Z^k\).

\subsection{Notation Used in the Analysis}\label{subsec:notation}
Individual vectors in the decision space use the Hilbert norm.
For \(Z\in\Hil^N\), we define the product network norm as in \zcref{eq:product-network-norm}.
We write \(JZ\) for the consensus vector whose coordinates are all \(N^{-1}\sum_m Z_m\), and \(\Pperp Z=Z-JZ\) for centered disagreement.
The compression parameter \(\alphastoch\) is the mean-square residual coefficient from \zcref{def:compressor}.
We also use the spectral gap \(\mu=1-\rho\), the Lyapunov weight \(\chi=\mu/4\), and the expected compressed-gossip contraction rate \(\etastoch=1-\gamma\mu/2\).

\subsection{Operational Details}\label{sec:operational-details}
This subsection presents the implementation choices required by \algname{}.
\paragraph{Choosing the potential}\label{subsec:ops-potential}
Both potentials in~\zcref{eq:kt-potential,eq:shifted-exp-potential} satisfy the theory.
The shifted-exponential option is computationally simpler because it avoids Gamma-function evaluations and has the closed-form betting fraction \(\beta_t^{\mathrm{SE}}(x)=\tanh(x/(t+1))\).
\paragraph{Choosing the gossip schedule}\label{subsec:ops-gossip}
\zcref{thm:expected-network-regret} uses \(q(t)=\lceil ct\rceil\), with \(c\) chosen from the effective compressed-tracker rate in \zcref{eq:effective-tracker-rate}: KT requires \(c\geq -2\log2/\log\etastoch\), while shifted-exponential requires \(c\geq -3/(2\log\etastoch)\).
A larger \(c\) reduces disagreement faster but costs more communication.
\paragraph{Communication and contraction}\label{subsec:ops-gamma}
The relaxation \(\gamma\) controls the rate \(\etastoch=1-\gamma\mu/2\) in the compressed-tracker contraction.
With \(\mu=1-\rho\), the proof uses the sufficient condition
\begin{equation*}
    0<\gamma
    \leq
    \min\left\{
    1,\,
    \frac{1-\sqrt{\alphastoch}}{
        2\sqrt{\alphastoch}\left(1+\frac{4}{\mu}\right)+\frac{\mu}{2}
    }
    \right\}.
\end{equation*}
When \(\gamma\mu\) is small, \(-\log\etastoch\approx\gamma\mu/2\), so \(c^{\mathrm{KT}}\approx4\log2/(\gamma\mu)\) and \(c^{\mathrm{SE}}\approx3/(\gamma\mu)\).
Thus, within the stated sufficient condition, taking \(\gamma\) as large as the graph/compressor bound permits lowers the required linear-schedule constant.

\section{Main Results}\label{sec:main-results}
The analysis uses two reusable statements.
First, the network argument only needs a Hilbert-space coin-betting regret bound for the radial prediction map.
For a terminal potential, namely the potential function at time \(T\), we write its restricted Fenchel conjugate as
\begin{equation}
    F_T^*(\theta)
    =
    \sup_{0\leq s\leq T}\{\theta s-F_T(s)\}.
    \label{eq:restricted-fenchel}
\end{equation}
Square-root convex excellence (see~\zcref{def:potential}) turns the scalar wealth telescope into a Hilbert-space reward telescope, and the final comparator conversion is exactly the conjugate in~\eqref{eq:restricted-fenchel}.
The main theorems are stated in terms of this conjugate.
The elementary envelope, used only to turn the KT and shifted-exponential conjugates into closed-form corollaries, is discussed in \zcref{lem:quad-exp-fenchel} in Appendix~\ref{sec:appendix-auxiliary}.

\begin{theorem}[General network regret template]\label{thm:general-network-regret}
    Let \(\Hil\) be a real Hilbert space.
    Suppose a \algname{} run uses a coin-betting potential whose radial Hilbert-space regret is controlled by its restricted terminal conjugate.
    Let \(\bar G_t=N^{-1}\sum_n G_{n,t}\) and \(y_t=\operatorname{Bet}_{t+1}(\bar G_t)\).
    If its prediction disagreement satisfies
    \begin{equation*}
        \sum_{t=0}^{T-1}\frac1N\sum_{n=1}^N
        \norm{x_{n,t}-y_t}
        \leq C_{\mathrm{net}}\sqrt T,
    \end{equation*}
    then for all \(u\in\Hil\) and all \(T>0\),
    \begin{equation}
        R_T^{\mathrm{net}}(u)
        \leq
        F_T^*(\norm u)+\varepsilon+3C_{\mathrm{net}}\sqrt T .
        \label{eq:general-network-regret-summary}
    \end{equation}
\end{theorem}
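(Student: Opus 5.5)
The plan is to compare every local prediction \(x_{n,t}\) with the single virtual prediction \(y_t=\operatorname{Bet}_{t+1}(\bar G_t)\) built from the network-average state. That virtual sequence is an exact centralized coin-betting run driven by the averaged gradients. I will use throughout the standing normalization implicit in \zcref{def:potential}(c): all subgradients satisfy \(\norm{g_{n,t}}\leq1\), so each \(\ell_{n,t}\) is \(1\)-Lipschitz.

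\textbf{Step 1 (averaged dynamics).} Each compressed gossip substep of \zcref{alg:deco-ef} preserves the average of the clean state, because \(W\) is doubly stochastic. The only change to the average in Round \(t\) is therefore the local movement, which gives
\begin{equation*}
\bar G_{t+1}=\bar G_t-\bar g_t,\qquad \bar g_t=\frac1N\sum_{n=1}^N g_{n,t}.
\end{equation*}
Since \(\bar G_0=0\), this unrolls to \(\bar G_t=-\sum_{s<t}\bar g_s\). Because \(\norm{\bar g_s}\leq1\), we get \(\norm{\bar G_t}\leq t\). Hence \(\bar G_t\) lies in the reachable domain, \(\clip_t(\bar G_t)=\bar G_t\), and \(y_t\) is precisely the radial Hilbert-space coin-betting prediction for the gradient sequence \((\bar g_t)\). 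By the hypothesis that the potential's radial regret is controlled by its restricted terminal conjugate, I invoke the centralized bound
\begin{equation*}
\sum_{t=0}^{T-1}\langle \bar g_t,y_t-u\rangle\leq F_T^*(\norm u)+\varepsilon .
\end{equation*}
This bound is where excellence and the conjugate in \zcref{eq:restricted-fenchel} enter, and I take it as given.

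\textbf{Step 2 (three-term decomposition).} Fix \(t\), an evaluating agent \(n\), and a loss index \(m\). I will apply three elementary bounds in sequence.
\begin{enumerate}[(i)]
\item By Lipschitzness, \(\ell_{m,t}(x_{n,t})\leq\ell_{m,t}(y_t)+\norm{x_{n,t}-y_t}\).
\item Again by Lipschitzness, \(\ell_{m,t}(y_t)\leq\ell_{m,t}(x_{m,t})+\norm{x_{m,t}-y_t}\).
\item By convexity, \(\ell_{m,t}(x_{m,t})-\ell_{m,t}(u)\leq\langle g_{m,t},x_{m,t}-u\rangle\). Cauchy–Schwarz then gives
\begin{equation*}
\langle g_{m,t},x_{m,t}-u\rangle\leq\langle g_{m,t},y_t-u\rangle+\norm{x_{m,t}-y_t}.
\end{equation*}
\end{enumerate}
Next, average over \(m\) and \(n\) and sum over \(t\). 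The inner-product terms collapse to \(\sum_t\langle\bar g_t,y_t-u\rangle\), which Step 1 bounds by \(F_T^*(\norm u)+\varepsilon\). Each of the three disagreement terms becomes \(\sum_t\frac1N\sum_n\norm{x_{n,t}-y_t}\), which is at most \(C_{\mathrm{net}}\sqrt T\) by hypothesis. This yields \zcref{eq:general-network-regret-summary}.

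\textbf{Main obstacle.} The delicate point is Step 1: the centralized guarantee must apply verbatim to \(y_t\). This requires that the average state evolves exactly by \(-\bar g_t\), so that the compression error is absorbed entirely by the trackers \(\widehat G\). It also requires that no clipping is active at the average, which is why the \(\norm{\bar G_t}\leq t\) check is needed. Both facts come from the difference-based design, so the remaining work is the routine bookkeeping above.
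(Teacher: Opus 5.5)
Your proposal is correct and follows the paper's proof essentially verbatim: your three bounds (i)--(iii) are exactly the paper's one-round global-loss comparison lemma. That lemma moves from \(x_{n,t}\) to \(y_t\) by Lipschitzness, inserts \(\ell_{m,t}(x_{m,t})\), and then uses convexity with \(\norm{g_{m,t}}\le1\), which yields the constant \(3\); summing over \(t\) and applying the centralized bound and the disagreement hypothesis finishes the proof, as in the paper. Your Step 1 combines the paper's average-tracking lemma with its radial Hilbert-space coin-betting theorem, whereas the paper's formal template simply assumes the resulting centralized bound, so deriving it (the no-clipping remark is unnecessary here but harmless) is the same route rather than a different one.
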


Second, for both concrete potentials with a linear gossip schedule, \zcref{thm:expected-network-regret} uses the effective tracker rate
\begin{equation}
    \etastoch=1-\frac{\gamma\mu}{2}
    \label{eq:effective-tracker-rate}
\end{equation}
rather than the clean relaxed-gossip rate \(1-\gamma\mu\), which is the ideal zero-mean contraction rate without compression.
\begin{theorem}[Expected network regret]\label{thm:expected-network-regret}
    Let \algname{} run over a real Hilbert space \(\Hil\) with either the KT potential~\eqref{eq:kt-potential} or the shifted-exponential potential~\eqref{eq:shifted-exp-potential}.
    Assume \(W\) is a valid gossip matrix for a connected communication graph, and let \(\rho\in[0,1)\) be the graph-derived product-\(\ell_2\) zero-mean contraction factor. 
    Let the compressor, as in~\zcref{def:compressor}, have mean-square residual parameter \(\alphastoch\).
    Put
    \begin{equation*}
        \mu=1-\rho,\qquad \bar\sigma=\sqrt{\alphastoch}.
    \end{equation*}
    Here, \(\mu\) is the spectral gap.
    Let \(P\in\{\mathrm{KT},\mathrm{SE}\}\) denote the chosen potential and let \(F_T^*\) be its restricted terminal conjugate.
    Choose
    \begin{equation*}
        0<\gamma\leq
        \min\left\{
        1,\,
        \frac{1-\bar\sigma}{
            2\bar\sigma\left(1+\frac{4}{\mu}\right)+\frac{\mu}{2}}
        \right\},
    \end{equation*}
    define \(\etastoch=1-\gamma\mu/2\), and use a linear gossip schedule \(q(t)=\lceil ct\rceil\), where \(c\geq -\frac{2\log2}{\log\etastoch}\) for KT and \(c\geq -\frac{3}{2\log\etastoch}\) for the shifted-exponential potential.
    Put \(\chi=\mu/4\) and
    \begin{align*}
        C_{\mathrm{sched}}^{\mathrm{KT}}
         & =
        48\left((\varepsilon+1)\left(1+\frac2{\log2}\right)+1\right),
        \\
        C_{\mathrm{sched}}^{\mathrm{SE}}
         & =
        4e^{3/2}(2\varepsilon+1).
    \end{align*}
    With \(C_{\mathrm{sched}}^P\) selecting the matching constant above, set
    \begin{equation*}
        C_{\mathrm{net}}^P
        =
        (2+\chi)\frac{C_{\mathrm{sched}}^P}{2}\sqrt N,
        \qquad
        C_{\mathrm{reg}}^{P,\mathrm{stoch}}=3C_{\mathrm{net}}^P .
    \end{equation*}
    Then, for every comparator \(u\in\Hil\) and every \(T>0\),
    \begin{equation}
        \Exp R_T^{\mathrm{net}}(u)
        \leq
        F_T^*(\norm u)+\varepsilon+
        C_{\mathrm{reg}}^{P,\mathrm{stoch}}\sqrt T ,
        \label{eq:expected-network-regret-summary}
    \end{equation}
    where the expectation is over the compressor randomness.
    The constants are independent of \(u\) and \(T\).
\end{theorem}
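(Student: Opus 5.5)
The proof of \zcref{thm:expected-network-regret} will reduce to \zcref{thm:general-network-regret}, applied in expectation. The template's proof is deterministic and pathwise. We will re-run it on each realization of the compressor randomness, keeping the disagreement sum as an explicit random quantity instead of bounding it by \(C_{\mathrm{net}}\sqrt T\). This gives
\[
R_T^{\mathrm{net}}(u)\le F_T^*(\norm u)+\varepsilon+3D_T,
\qquad
D_T=\sum_{t=0}^{T-1}\frac1N\sum_{n=1}^N\norm{x_{n,t}-y_t}.
\]
Taking expectations, it then suffices to show \(\Exp D_T\le C_{\mathrm{net}}^P\sqrt T\). The first part of the template (average of clean states, radial coin-betting on \(\bar G_t\)) needs no modification. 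Gossip preserves the average exactly, so \(\bar G_{t+1}=\bar G_t-\frac1N\sum_n g_{n,t}\) holds deterministically, and excellence of KT and SE gives the Hilbert-space wealth bound pathwise. Clipping is harmless because \(\norm{\bar G_t}\le t\).

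\textbf{Step 1: one-substep Lyapunov contraction.} The main work is bounding the disagreement in expectation. We will use the Lyapunov function
\[
\Psi^k=\norm{\Pperp Z^k}_{2,\Hil}^2+\chi\norm{Z^k-\widehat Z^k}_{2,\Hil}^2,
\qquad \chi=\mu/4,
\]
and show that one substep satisfies \(\Exp[\Psi^{k+1}\mid\mathcal F_k]\le\etastoch^2\Psi^k\), or at least \(\etastoch\) in the square-root sense. The computation has two pieces.
\begin{itemize}
\item The tracker error satisfies \(Z^{k+1}-\widehat Z^{k+1}=(v^k-m^k)+\gamma(W-I)\widehat Z^{k+1}\). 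Its expected squared norm is controlled by \(\alphastoch\norm{v^k}^2\) plus cross terms.
\item The clean disagreement satisfies \(\Pperp Z^{k+1}=((1-\gamma)I+\gamma W)\Pperp Z^k+\gamma(W-I)\Pperp(\widehat Z^{k+1}-Z^k)\). The first operator contracts by \(1-\gamma\mu\).
\end{itemize}
Young's inequality with weight tied to \(\gamma\mu\), combined with \(\norm{W-I}\le2\), produces the stated condition \(\gamma\le(1-\bar\sigma)/(2\bar\sigma(1+4/\mu)+\mu/2)\) and degrades the rate to \(1-\gamma\mu/2\). I expect this step to be the main obstacle: the constants must close exactly. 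If the squared form does not close, I will instead work with norms and \(\bar\sigma=\sqrt{\alphastoch}\) via Jensen's inequality, \(\Exp\norm{m-v}\le\bar\sigma\norm v\). That matches the appearance of \(\bar\sigma\) in the condition.

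\textbf{Step 2: across rounds.} The local movement adds \(\delta_t\) with \(\norm{\delta_{n,t}}\le1\) (Lipschitz losses, as implicit in coin betting). This increases \(\sqrt{\Psi}\) by at most \(\sqrt{(1+\chi)N}\). Iterating with \(q(t)\) substeps gives
\[
\Exp\sqrt{\Psi_t}\le\sqrt N(1+\chi/2)\sum_{s<t}\etastoch^{\sum_{r=s}^{t-1}q(r)}.
\]
With \(q(t)=\lceil ct\rceil\), the exponent is at least \(c\,t\), which is the crude bound used to pick \(c\). Hence \(\Exp\norm{\Pperp G_t}_{2,\Hil}\lesssim\sqrt N(2+\chi)/2\cdot\etastoch^{ct}\)-type sums. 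The factor \((2+\chi)/2\) in \(C_{\mathrm{net}}\) should come from converting \(\sqrt{\Psi}\) back to the clean disagreement.

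\textbf{Step 3: Lipschitz conversion and summation.} The map \(G\mapsto\operatorname{Bet}_{t+1}(\clip_t G)\) is Lipschitz on the ball of radius \(t\), with constant of order \(L_t\sim F_t(t)/t\) up to polynomial factors. This gives roughly \(2^t\) for KT and \(e^{t/2}/\sqrt t\)-type growth for SE. Clipping is \(1\)-Lipschitz and \(\clip_t\bar G_t=\bar G_t\), so
\[
\frac1N\sum_n\norm{x_{n,t}-y_t}\le L_t\,N^{-1/2}\norm{\Pperp G_t}_{2,\Hil}.
\]
The choices \(c\ge-2\log2/\log\etastoch\) for KT and \(c\ge-3/(2\log\etastoch)\) for SE make \(L_t\etastoch^{ct}\) summable, decaying like \(2^{-t}\) or \(e^{-t}\). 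Summing yields a bound of the form \(C_{\mathrm{sched}}^P\le C\sqrt T\), with the stated constants from explicit geometric-sum bounds on \(\varepsilon\)-dependent Lipschitz constants. Finally, taking expectations and plugging \(\Exp D_T\le C_{\mathrm{net}}^P\sqrt T\) into the pathwise template gives the claimed bound \eqref{eq:expected-network-regret-summary}.
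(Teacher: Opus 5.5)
Your architecture is the paper's: the pathwise split via the three-term comparison at $y_t=\operatorname{Bet}_{t+1}(\bar G_t)$, the radial coin-betting bound on the exactly preserved average, a conditional per-substep Lyapunov contraction, unrolling with bounded injections, and Lipschitz envelopes of the clipped bet map offset by the linear schedule. The step that fails as you lead with it is Step~1. Take $\Psi^k=(D^k)^2+\chi(E^k)^2$ with $\chi=\mu/4$ and the bound $\norm{W-I}\le2$ you intend to use, and consider $E^k=0$. Then $r^k=e^k-m^k=0$ almost surely, $D^{k+1}\le(1-\gamma\mu)D^k$ and $E^{k+1}\le2\gamma D^k$. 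Your estimate therefore closes only if $(1-\gamma\mu)^2+\mu\gamma^2\le\etastoch^2$, which is equivalent to $\gamma(1+3\mu/4)\le1$. The theorem's condition does not imply this: for the identity compressor, $\bar\sigma=0$, it allows $\gamma=1$. Young weights cannot rescue it, since there are no cross terms in this direction. So your fallback is not optional; it is exactly the paper's proof. From $\widehat Z^{k+1}=Z^k-r^k$ you get the linear recurrences $D^{k+1}\le(1-\gamma\mu)D^k+2\gamma\norm{r^k}_{2,\Hil}$ and $E^{k+1}\le2\gamma D^k+(1+2\gamma)\norm{r^k}_{2,\Hil}$. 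Conditional Jensen gives $\Exp[\norm{r^k}_{2,\Hil}\mid\mathcal F_k]\le\bar\sigma E^k$. For $V^k=D^k+\chi E^k$ (no squares, no Young), the $D^k$-coefficient is $1-\gamma\mu+2\chi\gamma=\etastoch$. The requirement $2\gamma\bar\sigma+\chi\bar\sigma(1+2\gamma)\le\chi\etastoch$ on the $E^k$-coefficient rearranges exactly to the stated bound on $\gamma$.

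There are two smaller corrections. First, the contraction exponent is $Q(s,t)\ge\lceil c(t-1)\rceil\ge c(t-1)$, not $ct$. The most recent injection only sees $q(t-1)$ substeps, and at $t=1$ the sum equals $\etastoch^{q(0)}=1$. Using $\etastoch^{c}\le1/4$ (KT) or $\etastoch^{c}\le e^{-3/2}$ (SE) over $t$ terms gives $t\,4^{-(t-1)}$ resp.\ $t\,e^{-3(t-1)/2}$; this shift is the source of the $e^{3/2}$ in $C_{\mathrm{sched}}^{\mathrm{SE}}$. Second, to reach the stated constants you need the explicit envelopes $C_\star^{\mathrm{KT}}(\varepsilon)2^t\log(t+2)/\sqrt{t+1}$ and $(2\varepsilon+1)e^{t/2}/\sqrt{t+1}$. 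For KT the radial derivative is of order $F_t(t)\log t$, not $F_t(t)/t$, because $F_t'/F_t$ is a digamma difference of order $\log t$ at $r=t$. Beyond that, your refinements are sound and only add slack. Cauchy--Schwarz gives $N^{-1/2}\norm{\Pperp G_t}_{2,\Hil}$ where the paper uses $\norm{\Pperp G_t}_{2,\Hil}$. The injection into the linear $V$ is at most $(1+\chi)\sqrt N$ rather than the paper's $(2+\chi)\sqrt N$. Your summability observation shows the expected disagreement sum is actually $O(1)$ in $T$, whereas the paper bounds each term by $C/\sqrt t$ and sums to $O(\sqrt T)$.
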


The proof uses the conditional compressed-tracker contraction in \zcref{thm:expected-tracking-contraction}, the expected state-disagreement unrolling in \zcref{lem:expected-state-disagreement-unrolling}, and the potential-specific Lipschitz and schedule-tail estimates in \zcref{sec:analysis-kt,sec:analysis-shifted-exp}.

\begin{corollary}[Closed-form KT and shifted-exponential bounds]\label{cor:closed-form-network-regret}
    In the setting of \zcref{thm:expected-network-regret}, the KT potential satisfies
    \begin{align}
        \Exp R_T^{\mathrm{net}}(u)
         & \leq
        \norm u
        \sqrt{2(T+1)}
        \notag            \\
         & \quad {}\times
        \sqrt{
            \log\br{1+
                \frac{2e^2\pi T(T+1)\norm u^2}{\varepsilon^2}}
        }
        \notag            \\
         & \quad
        +\varepsilon\br{1-\frac{1}{e\sqrt{\pi T}}}
        +C_{\mathrm{reg}}^{\mathrm{KT},\mathrm{stoch}}\sqrt T,
        \label{eq:closed-form-kt-network-regret}
    \end{align}
    and the shifted-exponential potential satisfies
    \begin{align}
        \Exp R_T^{\mathrm{net}}(u)
         & \leq
        \norm u
        \sqrt{
            (T+1)
            \log\br{1+
                \frac{{(T+1)}^2\norm u^2}{\varepsilon^2}}
        }
        \notag   \\
         & \quad
        +\varepsilon\br{1-\frac{1}{\sqrt{T+1}}}
        +C_{\mathrm{reg}}^{\mathrm{SE},\mathrm{stoch}}\sqrt T.
        \label{eq:closed-form-se-network-regret}
    \end{align}
\end{corollary}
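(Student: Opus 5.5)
The corollary follows from \zcref{thm:expected-network-regret}. The network and compression terms \(C_{\mathrm{reg}}^{P,\mathrm{stoch}}\sqrt T\) are already in final form. What remains is to bound the restricted terminal conjugate \(F_T^*(\norm u)\) from \zcref{eq:restricted-fenchel} in closed form for each potential. The endowment term \(\varepsilon\) should also be combined with the part of \(F_T^*\) that comes from \(-F_T(0)\). I would use the elementary envelope \zcref{lem:quad-exp-fenchel}. My reading is that it says: if \(F(s)\geq A\exp(s^2/(2B))\) on \([0,T]\), then
\begin{equation*}
\sup_{s}\{\theta s-F(s)\}\leq \theta\sqrt{B\log\br{1+B\theta^2/A^2}}-A .
\end{equation*}
The standard proof evaluates at the maximizer and uses \(\log\)-convexity and the Lambert-type inequality. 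Since the restriction \(0\leq s\leq T\) can only decrease the supremum, an unrestricted bound suffices.

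\textbf{Shifted-exponential case.} Here \(F_T^{\mathrm{SE}}(s)=\frac{\varepsilon}{\sqrt{T+1}}\exp(s^2/(2(T+1)))\) has exactly this form, with \(A=\varepsilon/\sqrt{T+1}\) and \(B=T+1\). Then \(B/A^2=(T+1)^2/\varepsilon^2\), and the envelope gives
\begin{equation*}
\norm u\sqrt{(T+1)\log\br{1+(T+1)^2\norm u^2/\varepsilon^2}}-\frac{\varepsilon}{\sqrt{T+1}} .
\end{equation*}
Adding \(\varepsilon\) from \eqref{eq:expected-network-regret-summary} yields \(\varepsilon(1-1/\sqrt{T+1})\). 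This is \eqref{eq:closed-form-se-network-regret}.

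\textbf{KT case.} First I need a Gaussian-type lower bound on the Gamma-ratio potential. I would use the known estimate from the KT literature (Orabona–Pál) of the form
\begin{equation*}
F_T^{\mathrm{KT}}(s)\geq \frac{\varepsilon}{e\sqrt{\pi T}}\exp\br{\frac{s^2}{2(T+1)}}\cdot\text{const}
\end{equation*}
for \(|s|\leq T\). It follows from log-convexity of \(\Gamma\) and Stirling bounds on \(\Gamma(\frac{T+1}{2})^2 2^T/(\pi T!)\). Then I would match the constants to the stated form. The target has \(A=\varepsilon/(e\sqrt{\pi T})\), and \(B\) should be such that \(\theta\sqrt{2(T+1)\log(\cdot)}\) appears. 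That suggests the lower bound \(F_T^{\mathrm{KT}}(s)\geq A\exp(s^2/(4(T+1)))\)-style, with \(B=2(T+1)\). Checking the ratio: \(B/A^2=2(T+1)e^2\pi T/\varepsilon^2\), which matches the logarithm's argument \(2e^2\pi T(T+1)\norm u^2/\varepsilon^2\). So the lemma is applied with \(B=2(T+1)\) and \(A=\varepsilon/(e\sqrt{\pi T})\). The \(-A\) term combined with \(+\varepsilon\) gives \(\varepsilon(1-1/(e\sqrt{\pi T}))\).

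\textbf{Main obstacle.} The hard step is the KT lower bound with exactly these constants, valid uniformly over \(0\leq s\leq T\) and all \(T\geq1\). I would first reduce the Gamma ratio at \(s\) to its value at \(0\) times \(\exp(s^2/(4(T+1)))\). That reduction uses the convexity of \(\log\Gamma\) via the inequality \(\Gamma(a+x)\Gamma(a-x)\geq\Gamma(a)^2\exp(x^2\psi'(\cdot))\) together with \(\psi'(z)\geq 1/z\). I would then bound the central value \(2^T\Gamma(\frac{T+1}{2})^2/(\pi T!)\geq 1/(e\sqrt{\pi T})\) by Stirling/Gautschi inequalities. If the \(\psi'\) route leaves an endpoint issue near \(s=T\), I would fall back on the restriction \(s\leq T\) and a direct comparison there.
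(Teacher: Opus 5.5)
Your proposal is correct and follows the paper's argument. You combine \eqref{eq:expected-network-regret-summary} with the quadratic-exponential envelope, using $\tau=T+1$, $\kappa=\varepsilon/\sqrt{T+1}$ for SE, and for KT the lower bound $F_T^{\mathrm{KT}}(s)\geq\frac{\varepsilon}{e\sqrt{\pi T}}\exp\br{\frac{s^2}{4(T+1)}}$ on $0\leq s\leq T$ with $\tau=2(T+1)$, $\kappa=\varepsilon/(e\sqrt{\pi T})$, absorbing $-\kappa$ into $+\varepsilon$. The paper only asserts that Gamma bound, and your route to it works with no endpoint fallback. The integral Taylor remainder for $\log\Gamma$ with $\psi'(z)\geq1/z$ gives $\log\br{F_T^{\mathrm{KT}}(s)/F_T^{\mathrm{KT}}(0)}\geq\frac{s^2}{2(T+1)}$, since the arguments stay $\geq\frac12$ for $s\leq T$. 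The duplication formula plus Gautschi's inequality then give $F_T^{\mathrm{KT}}(0)\geq\varepsilon\sqrt{2/(\pi(T+2))}\geq\varepsilon/(e\sqrt{\pi T})$.
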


\begin{corollary}[Polylogarithmic comparator scale]\label{cor:tilde-network-regret}
    The two expected regret bounds in \zcref{eq:closed-form-kt-network-regret,eq:closed-form-se-network-regret} are \(\widetilde{O}(\max\{1,\norm u\}\sqrt T)\), where \(\widetilde{O}\) suppresses polylogarithmic factors in \(T\) and \(\norm u\).
\end{corollary}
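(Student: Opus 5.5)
The statement to prove is \zcref{cor:tilde-network-regret}. The plan is to read off the order of each term in the closed-form bounds of \zcref{cor:closed-form-network-regret}. The constants \(C_{\mathrm{reg}}^{P,\mathrm{stoch}}\) and \(\varepsilon\) are independent of \(u\) and \(T\). Hence it suffices to show that each of the three summands in \zcref{eq:closed-form-kt-network-regret,eq:closed-form-se-network-regret} is at most \(\max\{1,\norm u\}\sqrt T\) times a polylogarithmic factor in \(T\) and \(\norm u\).

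\textbf{Endowment and compression terms.} The endowment term satisfies \(\varepsilon(1-\cdot)\le\varepsilon\). This is \(O(1)\), and hence \(O(\max\{1,\norm u\}\sqrt T)\) for \(T\ge1\). The compression term \(C_{\mathrm{reg}}^{P,\mathrm{stoch}}\sqrt T\) is trivially at most \(C_{\mathrm{reg}}^{P,\mathrm{stoch}}\max\{1,\norm u\}\sqrt T\).

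\textbf{Comparator term.} For \(T\ge1\) we use \(T+1\le 2T\), so \(\sqrt{2(T+1)}\le 2\sqrt T\). Writing \(M=\max\{1,\norm u\}\), the argument of the logarithm is bounded by
\[
1+\frac{2e^2\pi T(T+1)\norm u^2}{\varepsilon^2}\le 1+\frac{4e^2\pi T^2M^2}{\varepsilon^2}.
\]
The KT term is therefore at most
\[
2M\sqrt T\,\sqrt{\log\bigl(1+4e^2\pi T^2M^2/\varepsilon^2\bigr)}.
\]
The remaining factor is \(O(\sqrt{\log T+\log M})\) with \(\varepsilon\)-dependent constants, which is polylogarithmic. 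The SE case is identical, with \((T+1)^2\le4T^2\).

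\textbf{Conclusion and main point of care.} Summing the three bounds gives \(\Exp R_T^{\mathrm{net}}(u)\le K\,M\sqrt T\,\mathrm{polylog}(T,M)\). The constant \(K\) depends only on \(\varepsilon\), \(N\), \(\mu\), and the compressor. The only real care is the \(\max\{1,\cdot\}\): it absorbs the \(u\)-independent additive terms, and it handles \(u=0\) and small \(\norm u\), where \(\log M\) could otherwise be undefined or negative. I do not expect a genuine obstacle beyond stating that \(\widetilde O\) hides constants depending on \(\varepsilon\) and the network and compressor parameters, not on \(u\) or \(T\).
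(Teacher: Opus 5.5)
Your proposal is correct and matches the paper, which gives no separate proof and treats this corollary as a term-by-term reading of the closed-form bounds in \zcref{cor:closed-form-network-regret}. In that reading, the endowment term is at most \(\varepsilon\), the compression term is already \(C_{\mathrm{reg}}^{P,\mathrm{stoch}}\sqrt T\), and the comparator term is \(\norm u\sqrt T\) times \(O\bigl(\sqrt{1+\log T+\log\max\{1,\norm u\}}\bigr)\) with an \(\varepsilon\)-dependent constant, which is exactly what your \(T+1\le 2T\) and \(M=\max\{1,\norm u\}\) bookkeeping establishes.
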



\section{Analysis}\label{sec:analysis}
This section provides the proof idea for the main regret guarantees.
The auxiliary inequalities, Lyapunov algebra, and potential-specific estimates are proved in the appendices.
The main argument is as follows: reduce network regret to a centralized coin-betting term plus a prediction-disagreement term, bound the centralized term by Hilbert-space coin betting, and then use compressed-gossip tracking to show that the disagreement term is only \(O(\sqrt T)\) under the stated schedule.

\subsection{Regret Decomposition}\label{subsec:analysis-outline-regret}
Let
\begin{equation*}
    \bar g_t=\frac1N\sum_{n=1}^N g_{n,t},
    \qquad
    \bar G_t=\frac1N\sum_{n=1}^N G_{n,t},
\end{equation*}
and define the centralized prediction
\begin{equation*}
    y_t=\operatorname{Bet}_{t+1}(\bar G_t).
\end{equation*}
Since \(\bar G_t\) is the sum of \(t\) averaged negative subgradients of norm at most one, \(\norm{\bar G_t}\leq t\).
Hence, the centralized prediction lies in the reachable domain and no clipping is needed.
The average clean state is exactly the centralized cumulative negative gradient,
\begin{equation*}
    \bar G_t=\sum_{s=0}^{t-1}(-\bar g_s),
\end{equation*}
because every gossip correction has zero network average.
Thus, the network can be compared with a fictitious centralized coin-betting learner that sees the averaged gradients.
This centralized learner is the bridge between the parameter-free OCO analysis and the decentralized protocol: all remaining network effects enter only through the distance from \(x_{n,t}\) to \(y_t\).

Convexity and Lipschitzness give, round by round,
\begin{equation}
    \frac1N\sum_{n=1}^N\bar\ell_t(x_{n,t})-\bar\ell_t(u)
    \leq
    \langle\bar g_t,y_t-u\rangle
    +
    \frac3N\sum_{n=1}^N\norm{x_{n,t}-y_t}.
    \label{eq:main-regret-split-round}
\end{equation}
The constant \(3\) comes from comparing the global loss at \(x_{n,t}\) to the centralized prediction \(y_t\), replacing local subgradients by their network average, and using the \(1\)-Lipschitz assumption.
The first term is the regret of the centralized radial coin-betting learner.
The second term is the price of decentralization: it vanishes if all agents predict exactly at the average state.
Summing~\eqref{eq:main-regret-split-round} results in
\begin{equation}
    R_T^{\mathrm{net}}(u)
    \leq
    \sum_{t=0}^{T-1}\langle\bar g_t,y_t-u\rangle
    +
    3\sum_{t=0}^{T-1}\frac1N\sum_{n=1}^N\norm{x_{n,t}-y_t}.
    \label{eq:main-regret-split}
\end{equation}
This is the central proof split used throughout the paper.
The detailed one-round comparison is given in \zcref{lem:global-loss-centralized-comparison}.
We analyze these two terms in what follows.

\subsection{Centralized Coin Betting}\label{subsec:analysis-outline-coin}
We first handle the term that would remain even if every agent predicted at the network average.
Set \(c_t=-\bar g_t\).
Since each local loss is \(1\)-Lipschitz, \(\norm{c_t}\leq1\), and the centralized state above is \(\bar G_t=\sum_{s<t}c_s\).
Square-root convex excellence lets the scalar coin-betting potential lift radially to a Hilbert space: the one-dimensional wealth telescoping argument applies to \(\norm{\bar G_t}\), while Jensen's inequality controls the change in squared norm after adding the vector \(c_t\).
The terminal Fenchel conjugate~\eqref{eq:restricted-fenchel} then converts the wealth lower bound into
\begin{equation}
    \begin{aligned}
        \sum_{t=0}^{T-1}\langle\bar g_t,y_t-u\rangle
         & \leq
        F_T^*(\norm u)+\varepsilon .
    \end{aligned}
    \label{eq:main-centralized-bound}
\end{equation}
The scalar telescoping, Fenchel conversion, and Hilbert-space lift are proved in \zcref{sec:analysis-coin-betting}.
It remains to show that the actual local predictions stay close enough to this centralized trajectory.

\subsection{Compressed Tracking}\label{subsec:analysis-outline-tracking}
The disagreement term in~\eqref{eq:main-regret-split} is controlled first at the state level.
At a fixed outer round, suppress the round index and write \(Z^k\) and \(\widehat Z^k\) for the inner-loop clean state and tracker.
Formally, let us define the \emph{inner-loop} index \(k\) to run from \(0\) to \(q(t)-1\) for the \(q(t)\) gossip substeps at outer-loop time \(t\).
The inner gossip loop tracks two quantities: the clean-state disagreement \(D^k\) and the compression residual \(E^k\), defined by
\begin{equation}
    D^k=\norm{\Pperp Z^k}_{2,\Hil},
    \qquad
    E^k=\norm{Z^k-\widehat Z^k}_{2,\Hil}.
    \label{eq:tracking-components}
\end{equation}
With \(\mu=1-\rho\), \(\bar\sigma=\sqrt{\alphastoch}\), and \(\chi=\mu/4\), the Lyapunov quantity
\begin{equation*}
    V^k=D^k+\chi E^k
\end{equation*}
contracts in conditional expectation by the effective rate
\begin{equation*}
    \etastoch=1-\frac{\gamma\mu}{2}
\end{equation*}
whenever \(\gamma\) satisfies the step-size condition in \zcref{thm:expected-network-regret}.
The important point is that the rate is the compressed-tracker rate, not the cleaner relaxed-gossip rate one would obtain without residuals.
More precisely, conditionally on the history \(\mathcal F_{t,k}\) before gossip substep \(k\),
\begin{equation*}
    \Exp[V^{k+1}\mid\mathcal F_{t,k}]
    \leq
    \etastoch V^k .
\end{equation*}

Each outer round injects a bounded gradient perturbation, and then \(q(t)\) gossip substeps contract it.
Unrolling these injections in expectation gives
\begin{equation}
    \begin{aligned}
        \Exp\frac1N\sum_{n=1}^N\norm{G_{n,t}-\bar G_t}
         & \leq
        (2+\chi)\sqrt N
        \sum_{s<t}\etastoch^{Q(s,t)}, \\
        Q(s,t)
         & =
        \sum_{r=s}^{t-1}q(r).
    \end{aligned}
    \label{eq:main-state-disagreement}
\end{equation}
The Lyapunov contraction and unrolling details are collected in \zcref{sec:analysis-network-dynamics}.
The final step is to pass this state estimate through the nonlinear coin-betting prediction map.

\subsection{From State Disagreement to Regret}\label{subsec:analysis-outline-finish}
The prediction map is Lipschitz on the clipped ball of radius \(t\).
If \(L(t)\) denotes a valid envelope for the potential in use, then~\eqref{eq:main-state-disagreement} implies
\begin{equation*}
    \Exp\frac1N\sum_{n=1}^N\norm{x_{n,t}-y_t}
    \leq
    (2+\chi)\sqrt N\,L(t)
    \sum_{s<t}\etastoch^{Q(s,t)}.
\end{equation*}
For the KT potential, the envelope grows like \(2^t\log(t+2)/\sqrt{t+1}\).
For the shifted-exponential potential, it grows like \(\exp(t/2)/\sqrt{t+1}\).
For \(P\in\{\mathrm{KT},\mathrm{SE}\}\) denoting the chosen potential, the stated linear schedules make the gossip tail small enough that
\begin{equation}
    \Exp\sum_{t=0}^{T-1}\frac1N\sum_{n=1}^N\norm{x_{n,t}-y_t}
    \leq C_{\mathrm{net}}^P\sqrt T .
    \label{eq:main-prediction-disagreement}
\end{equation}
Taking expectations in~\eqref{eq:main-regret-split} and substituting~\eqref{eq:main-centralized-bound} and~\eqref{eq:main-prediction-disagreement} proves \zcref{thm:expected-network-regret}.
The centralized coin-betting term pays the comparator-adaptive part, while compressed tracking contributes only the stated network constant times \(\sqrt T\).

The KT and shifted-exponential analytic checks, including square-root convexity and the schedule-tail estimates, appear in \zcref{sec:analysis-kt,sec:analysis-shifted-exp}.

\section{Experiments}\label{sec:experiments}

\begin{figure*}[htbp]
    \centering
    \includegraphics[width=.85\textwidth]{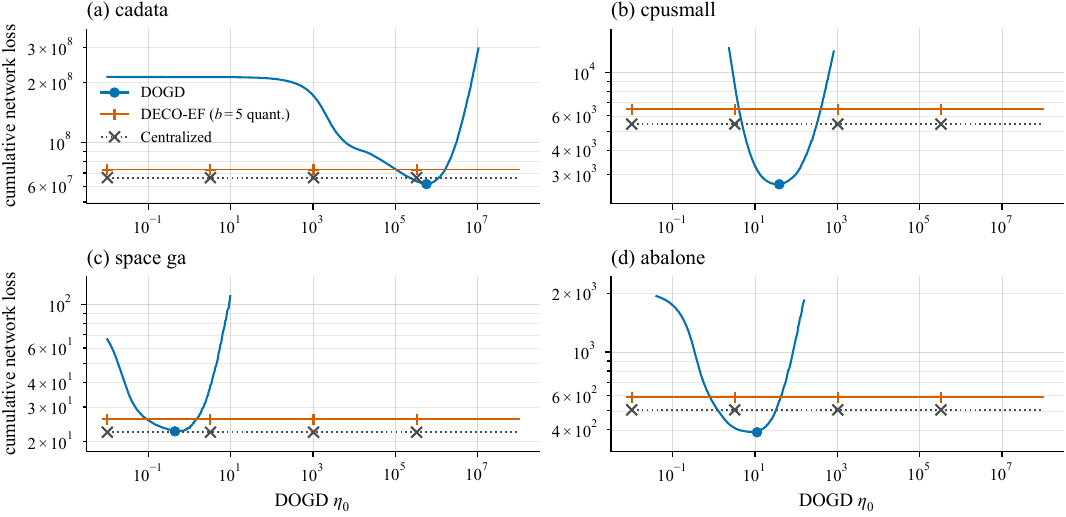}
    \caption{LIBSVM online-regression experiments, showing decentralized online gradient descent  (DOGD) tuning sensitivity on the ring graph against parameter-free methods.}\label{fig:libsvm-experiments}
\end{figure*}
\begin{figure*}[htbp]
    \centering
    \includegraphics[width=.85\textwidth]{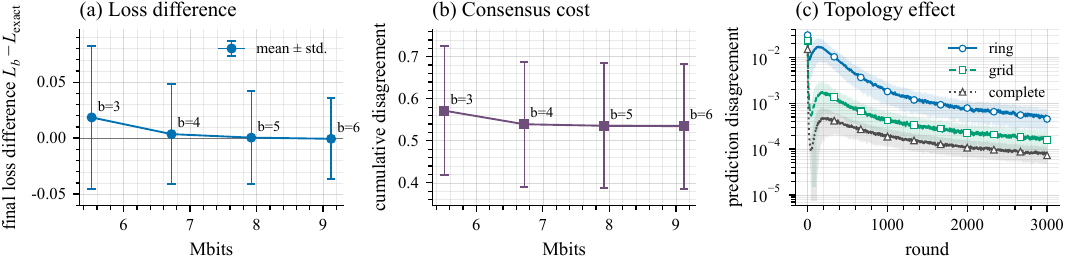}
    \caption{Synthetic quantized-communication diagnostics for DECO-EF\@.
    All decentralized runs use \(\gamma=1/2\), and the compressed runs use $b$-bit quantization.
    Each plotted point or curve is the average over one hundred independent synthetic streams, with error bars or shaded bands showing one standard deviation.
    Panels (a) and (b) use the complete graph with \(q(t)=1\), the topology that most clearly separates quantization resolution from slow graph mixing.
    Panel (a) reports the paired final cumulative network loss difference \(L_b-L_{\mathrm{exact}}\) against the no-quantization reference run with the same stream seed and relaxation.
    Panel (b) reports cumulative prediction disagreement against transmitted Mbits.
    Panel (c) fixes \(b=3\) and \(q(t)=1\), and compares the disagreement on the ring, \(4\times5\) toroidal grid, and complete graph.}\label{fig:synthetic-compression}
\end{figure*}

We evaluate two complementary effects: LIBSVM~\cite{LIBSVM} streams test DOGD tuning sensitivity against parameter-free coin betting, while a controlled heterogeneous synthetic stream isolates quantized-communication effects on loss, bit budget, and consensus.
Following the previous DECO evaluation protocol, we use the absolute linear prediction loss \(\ell_{n,t}(x)=|\langle x,z_{n,t}\rangle-y_{n,t}|\) with \(\norm{z_{n,t}}\leq1\), so that every subgradient has a norm at most one.
The real-data tasks use row-scaled LIBSVM regression datasets \texttt{cadata}, \texttt{cpusmall}, \texttt{space\_ga}, and \texttt{abalone}, assigned to \(N=20\) agents in round-robin order on a ring with \(q(t)=1\).
The DECO-EF curves use the shifted-exponential potential with \(b=5\)-bit uniform scalar quantization.
The centralized coin-betting reference uses the KT potential, exact communication and the complete graph.
Thus, it is a non-decentralized parameter-free reference rather than a communication-limited method.
All decentralized coin-betting runs use \(\gamma=1/2\).

Since our topologies are regular, we use the uniform gossip matrix: \(W_{ij}=1/\deg\) when \(j\) is in the neighborhood of \(i\), and \(W_{ij}=0\) otherwise, with \(\deg\) including the self-loop.
For synthetic compression diagnostics, all runs use the shifted-exponential DECO-EF\@.
Communication costs count raw dense uniform-scalar quantization bits, one 32-bit scale plus \(b\) bits per coordinate, reported as the total payload emitted by all agents rather than multiplied by graph edges.
For fixed data, method, compressor, and \(q(t)\), the Mbits value is therefore the same across topologies.

The synthetic stream has a dimension of \(20\) and \(T=3000\) rounds.
For each seed \(r\in\{0,\ldots,99\}\), we draw and normalize \(u^\star\sim\mathcal N(0,I_{20})\), then draw shared base vectors \(b_t\sim\mathcal N(0,I_{20})\) and agent centers \(\mu_n\sim\mathcal N(0,2.0^2I_{20})\).
We set \(z_{n,t}=a_{n,t}\min\{1,1/\norm{a_{n,t}}\}\) with \(a_{n,t}=b_t+\mu_n\).
The labels are \(y_{n,t}=\langle u^\star,z_{n,t}\rangle+\nu_{n,t}\), where \(\nu_{n,t}\sim\mathcal N(0,0.03^2)\) is independent across agents and rounds.
This preserves a common regression signal while making local streams heterogeneous enough for consensus effects to be visible.

\zcref{fig:libsvm-experiments} shows DOGD tuning sensitivity across the four ordered LIBSVM streams.
The DOGD curves confirm that the best learning rate varies widely across datasets, while DECO-EF and the centralized coin-betting baseline show consistent performance.
\zcref{fig:synthetic-compression} illustrates the communication and topology aspects.
Coarse quantization has the largest observable effect on final cumulative loss variance, \(b=3\) leaves more cumulative disagreement than \(b=4,5,6\), and the grid and complete graph remove prediction disagreement much faster than the ring.

\section{Conclusion}\label{sec:conclusion}

This paper introduced \algname{}, a decentralized online learning family that avoids OCO learning-rate tuning while communicating compressed state differences.
Under a linear gossip schedule, \algname{} achieves the expected \(\widetilde{O}(\max\{1,\norm u\}\sqrt T)\) regret rate of centralized parameter-free methods, up to a network/compressor constant independent of \(T\) and \(\norm u\).
Synthetic and real-data experiments illustrate the sensitivity of DOGD tuning and the communication/disagreement tradeoffs of \algname{}.
A natural next target is the gap between the linear-schedule guarantee and the attractive constant communication regime \(q(t)=O(1)\).

\appendices%
\zcsetup{
    countertype = {
            section       = appendix,
            subsection    = appendix,
            subsubsection = appendix,
            paragraph     = appendix,
            subparagraph  = appendix
        }
}
\section{Proof Details}\label{sec:appendix-auxiliary}%
\label{sec:analysis-coin-betting}%
\label{sec:analysis-network-dynamics}%
\label{sec:analysis-regret-decomposition}%
\label{sec:analysis-kt}%
\label{sec:analysis-shifted-exp}%
\label{sec:prediction-disagreement-details}
This appendix collects the coin-betting reductions, compressed-tracking estimates, regret decomposition, potential-specific tail calculations, and final prediction-disagreement bounds used in \zcref{sec:analysis}.
For the closed-form corollaries, we use the following quadratic-exponential envelope:
\begin{equation}
    \Phi_{\tau,\kappa}(\theta)
    =
    \theta\sqrt{\tau\log\br{1+\frac{\tau\theta^2}{\kappa^2}}}
    -\kappa .
    \label{eq:quad-exp-fenchel-envelope}
\end{equation}
For the two concrete potentials, set
\begin{align*}
    \Psi_T^{\mathrm{KT}}(\theta)
     & =
    \Phi_{2(T+1),\,\varepsilon/(e\sqrt{\pi T})}(\theta),
    \\
    \Psi_T^{\mathrm{SE}}(\theta)
     & =
    \Phi_{T+1,\,\varepsilon/\sqrt{T+1}}(\theta).
\end{align*}

\paragraph{Coin-betting reductions}
Let \(P=\{F_t\}\) be a coin-betting potential and \(h_t(x)=\beta_t(x)F_{t-1}(x)\).
For a scalar coin sequence \(c_t\in[-1,1]\), define \(S_t=\sum_{s=0}^{t-1}c_s\).

\begin{lemma}[Quadratic-exponential Fenchel envelope]\label{lem:quad-exp-fenchel}
    Let \(\tau,\kappa>0\) and \(x,y\geq0\).
    Then,
    \begin{equation*}
        xy-\kappa\exp\br{\frac{x^2}{2\tau}}
        \leq
        y\sqrt{\tau\log\br{1+\frac{\tau y^2}{\kappa^2}}}-\kappa .
    \end{equation*}
\end{lemma}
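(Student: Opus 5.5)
The approach is to compare the left-hand side with the proposed right-hand side directly, using a threshold point rather than computing the exact maximizer. Set
\begin{equation*}
    b=\frac{\tau y^2}{\kappa^2},\qquad a=\sqrt{\tau\log(1+b)}\geq0,\qquad \varphi(x)=\kappa\exp\br{\frac{x^2}{2\tau}} .
\end{equation*}
The goal is then \(xy-\varphi(x)\leq ay-\kappa\) for all \(x\geq0\). I split into the cases \(x\leq a\) and \(x>a\).

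\emph{Case \(x\leq a\).} Since \(y\geq0\), we have \(xy\leq ay\). Since \(\varphi(x)\geq\kappa\), we have \(-\varphi(x)\leq-\kappa\). Adding these two inequalities gives the claim.

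\emph{Case \(x>a\).} Here it suffices to show \(\varphi(x)-\kappa\geq y(x-a)\). The function \(\varphi\) is convex on \([0,\infty)\), so the tangent-line bound gives
\begin{equation*}
    \varphi(x)\geq\varphi(a)+\varphi'(a)(x-a).
\end{equation*}
We have \(\varphi(a)\geq\kappa\) and \(x-a>0\). Hence it is enough to prove \(\varphi'(a)\geq y\).

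This last inequality is the only real computation, and I expect it to be the main (mild) obstacle. From \(e^{a^2/(2\tau)}=\sqrt{1+b}\) we get
\begin{equation*}
    \varphi'(a)=\frac{\kappa a}{\tau}\sqrt{1+b}.
\end{equation*}
Both sides of \(\varphi'(a)\geq y\) are nonnegative, so we may square them. Substituting \(a^2=\tau\log(1+b)\) and \(y^2=\kappa^2 b/\tau\), the condition becomes
\begin{equation*}
    \frac{\kappa^2}{\tau}(1+b)\log(1+b)\geq\frac{\kappa^2 b}{\tau},
    \qquad\text{i.e.}\qquad (1+b)\log(1+b)\geq b .
\end{equation*}
This holds for every \(b\geq0\) by the elementary bound \(\log(1+b)\geq b/(1+b)\). (Equivalently, \(g(b)=(1+b)\log(1+b)-b\) satisfies \(g(0)=0\) and \(g'(b)=\log(1+b)\geq0\).)

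The degenerate case \(y=0\) needs no separate treatment. Then \(a=0\), and the inequality reduces to \(-\varphi(x)\leq-\kappa\), which holds trivially.

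Combining the two cases gives the lemma for all \(x,y\geq0\).
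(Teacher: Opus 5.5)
Your proof is correct and follows the paper's argument. Both split at the threshold $x=\sqrt{\tau\log(1+\tau y^2/\kappa^2)}$ and close the upper case with the same inequality $(1+b)\log(1+b)\geq b$, which is the paper's $q\log q\geq q-1$ with $q=1+b$. Your tangent-line bound for $\varphi$ at $a$ is exactly the paper's chain $\exp(v^2/2)\geq\exp(r^2/2+rd)\geq\exp(r^2/2)(1+rd)$ written in the unnormalized variables $v=x/\sqrt\tau$, $r=a/\sqrt\tau$, $d=v-r$.
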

\begin{IEEEproof}
    Set
    \begin{equation*}
        a=\frac{y\sqrt\tau}{\kappa},\qquad
        v=\frac{x}{\sqrt\tau},\qquad
        r=\sqrt{\log(1+a^2)} .
    \end{equation*}
    It suffices to prove \(av-\exp(v^2/2)\leq ar-1\).
    If \(v\leq r\), then \(av\leq ar\) and \(\exp(v^2/2)\geq1\).
    If \(v>r\), write \(d=v-r\geq0\).
    The inequality \(q\log q\geq q-1\) for \(q\geq1\), applied to \(q=1+a^2\), gives \(a\leq r\exp(r^2/2)\).
    Together with \(\exp z\geq1+z\) and \(r^2/2+rd\leq v^2/2\), this yields \(\exp(v^2/2)\geq1+ad\), which is equivalent to the desired bound.
\end{IEEEproof}

\begin{lemma}[Concrete terminal conjugate envelopes]\label{lem:concrete-conjugate-envelopes}
    For every integer \(T\geq1\) and every \(\theta\geq0\), the KT and shifted-exponential terminal conjugates satisfy
    \begin{equation*}
        {(F_T^{\mathrm{KT}})}^*(\theta)\leq \Psi_T^{\mathrm{KT}}(\theta),
        \qquad
        {(F_T^{\mathrm{SE}})}^*(\theta)\leq \Psi_T^{\mathrm{SE}}(\theta).
    \end{equation*}
\end{lemma}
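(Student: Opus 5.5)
The plan is to reduce both claims to \zcref{lem:quad-exp-fenchel}. For each potential we exhibit a lower bound of the form
\begin{equation*}
    F_T(s)\geq \kappa\exp\br{\frac{s^2}{2\tau}}\qquad\text{for all }0\leq s\leq T,
\end{equation*}
with the \((\tau,\kappa)\) appearing in \(\Psi_T^P\).

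Given such a bound, the argument is immediate. For every \(s\in[0,T]\),
\begin{equation*}
    \theta s-F_T(s)\leq \theta s-\kappa\exp\br{\frac{s^2}{2\tau}}\leq \Phi_{\tau,\kappa}(\theta).
\end{equation*}
The second inequality is \zcref{lem:quad-exp-fenchel} with \(x=s\) and \(y=\theta\). Taking the supremum over \(s\in[0,T]\) in \eqref{eq:restricted-fenchel} gives \(F_T^*(\theta)\leq\Phi_{\tau,\kappa}(\theta)\).

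For SE there is nothing more to do. The formula \eqref{eq:shifted-exp-potential} is already exactly of this form with \(\tau=T+1\) and \(\kappa=\varepsilon/\sqrt{T+1}\), so the bound holds with equality and we obtain \(\Psi_T^{\mathrm{SE}}\).

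For KT we need \(F_T^{\mathrm{KT}}(s)\geq \frac{\varepsilon}{e\sqrt{\pi T}}\exp\br{\frac{s^2}{4(T+1)}}\) on \([0,T]\). I would prove this in two steps.

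\textbf{Step 1 (value at zero).} Use log-convexity and evenness (condition (b) of \zcref{def:potential}) only as a sanity check. The real work is the direct bound
\begin{equation*}
    F_T(0)=\varepsilon\,\frac{2^T\Gamma\br{\frac{T+1}{2}}^2}{\pi\,T!}\geq\frac{\varepsilon}{e\sqrt{\pi T}}.
\end{equation*}
I would get this from the Legendre duplication formula, \(T!=\Gamma(T+1)=2^{T}\Gamma\br{\frac{T+1}{2}}\Gamma\br{\frac{T}{2}+1}/\sqrt\pi\). This gives \(F_T(0)=\varepsilon\,\Gamma\br{\frac{T+1}{2}}/\br{\sqrt\pi\,\Gamma\br{\frac T2+1}}\). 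Gautschi's inequality then yields \(\Gamma\br{\frac{T+1}{2}}/\Gamma\br{\frac T2+1}\geq (T/2+1)^{-1/2}\geq 1/\sqrt{T}\cdot c\), and the \(1/e\) slack absorbs the constant.

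\textbf{Step 2 (growth in \(s\)).} Let \(u=s/2\) and \(m=(T+1)/2\). We need
\begin{equation*}
    \log\frac{\Gamma(m+u)\Gamma(m-u)}{\Gamma(m)^2}\geq \frac{s^2}{4(T+1)}=\frac{u^2}{2m}.
\end{equation*}
Write the left side as \(\int_0^u\int_{-v}^{v}\psi'(m+w)\,dw\,dv\). Using \(\psi'(z)\geq 1/z\) and convexity of \(1/z\), we get \(\int_{-v}^{v}\psi'(m+w)\,dw\geq 2v/m\). Integrating gives \(u^2/m\geq u^2/(2m)\). This holds for \(|u|<m\), which covers \(s\leq T<T+1\).

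The main obstacle is Step 1, in particular tracking the exact constant \(1/(e\sqrt{\pi T})\) rather than a generic \(\Theta(T^{-1/2})\). If Gautschi's inequality is not sharp enough at \(T=1,2\), I would check small \(T\) by hand and use Stirling bounds, \(\sqrt{2\pi}\,n^{n+1/2}e^{-n}\leq n!\leq e\,n^{n+1/2}e^{-n}\), for the rest. The factor \(e\) in the target suggests the author used exactly this Stirling envelope. Once both steps are in place, applying \zcref{lem:quad-exp-fenchel} with \(\tau=2(T+1)\) and \(\kappa=\varepsilon/(e\sqrt{\pi T})\) yields \(\Psi_T^{\mathrm{KT}}\).
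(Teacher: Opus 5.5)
Your proof is correct and takes the same route as the paper: lower-bound $F_T$ on $[0,T]$ by $\kappa\exp(s^2/(2\tau))$ and invoke the quadratic-exponential lemma, with the SE case exact and the KT case needing a Gamma estimate. The paper only asserts that KT estimate from ``standard Gamma properties'', and your two steps correctly supply it. The duplication formula plus Gautschi already gives $F_T(0)\ge\varepsilon/\sqrt{\pi(T/2+1)}\ge\varepsilon/(e\sqrt{\pi T})$ for all $T\ge1$, so no Stirling bound or small-$T$ check is needed. The bound $\psi'(z)\ge 1/z$ with convexity even yields the stronger exponent $s^2/(2(T+1))$.
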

\begin{IEEEproof}
    For the shifted-exponential potential, the bound follows directly from \zcref{lem:quad-exp-fenchel} with
    \(\tau=T+1\) and \(\kappa=\varepsilon/\sqrt{T+1}\).

    For the KT potential, one can show using standard Gamma properties, that on \(0\leq s\leq T\),
    \begin{equation*}
        F_T^{\mathrm{KT}}(s)
        \geq
        \frac{\varepsilon}{e\sqrt{\pi T}}
        \exp\!\left(\frac{s^2}{4(T+1)}\right).
    \end{equation*}
    Applying \zcref{lem:quad-exp-fenchel} with
    \(\tau=2(T+1)\) and \(\kappa=\varepsilon/(e\sqrt{\pi T})\) gives the KT envelope.
\end{IEEEproof}

\begin{theorem}[One-dimensional coin-betting regret]\label{thm:oned-coin}
    Suppose the restricted conjugate~\eqref{eq:restricted-fenchel} is finite.
    Then, for every \(\theta\geq0\) and every \(T\geq1\),
    \begin{equation}
        \begin{aligned}
            \theta S_T-\sum_{t=0}^{T-1}c_t h_{t+1}(S_t)
             & \leq
            F_T^*(\theta)+\varepsilon .
        \end{aligned}
        \label{eq:oned-betting-regret}
    \end{equation}
\end{theorem}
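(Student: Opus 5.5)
The plan is the standard wealth argument. Define the scalar wealth
\[
\mathrm{W}_t=\varepsilon+\sum_{s=0}^{t-1}c_s h_{s+1}(S_s),
\]
so that \(\mathrm{W}_0=\varepsilon\). The goal is the wealth lower bound \(\mathrm{W}_t\geq F_t(S_t)\) for every \(t\geq0\). Once it holds at \(t=T\), the statement follows directly: \(|S_T|\leq T\) and \(F_T\) is even, so
\[
\theta S_T-\sum_{t=0}^{T-1}c_t h_{t+1}(S_t)
=\theta S_T-\mathrm{W}_T+\varepsilon
\leq \theta|S_T|-F_T(|S_T|)+\varepsilon
\leq F_T^*(\theta)+\varepsilon .
\]
The first inequality uses \(\theta S_T\leq\theta|S_T|\) for \(\theta\geq0\). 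The second uses the definition~\eqref{eq:restricted-fenchel}, with \(s=|S_T|\in[0,T]\).

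The lower bound is proved by induction on \(t\). The base case is \(\mathrm{W}_0=\varepsilon=F_0(0)=F_0(S_0)\), by condition (a) of \zcref{def:potential}. For the step, assume \(\mathrm{W}_t\geq F_t(S_t)\). Since \(|c_s|\leq1\), we have \(|S_t|\leq t=(t+1)-1\), so condition (c) applies at index \(t+1\) with \(x=S_t\) and \(c=c_t\). This gives
\[
F_{t+1}(S_t+c_t)\leq \bigl(1+c_t\beta_{t+1}(S_t)\bigr)F_t(S_t)=F_t(S_t)+c_t h_{t+1}(S_t).
\]
The right-hand side is at most \(\mathrm{W}_t+c_th_{t+1}(S_t)=\mathrm{W}_{t+1}\) by the induction hypothesis. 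Also \(S_t+c_t=S_{t+1}\), which closes the induction.

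Two domain details need checking, and this is where I expect the only real care to be needed. First, \(h_{t+1}(S_t)=\beta_{t+1}(S_t)F_t(S_t)\) must be well defined. This requires \(S_t\pm1\in(-a_{t+1},a_{t+1})\), which holds because \(|S_t|+1\leq t+1<a_{t+1}\). It also requires \(F_t(S_t)\) to be defined, which holds because \(|S_t|\leq t<a_t\). Both bounds come from condition (a). Second, \(F_T(|S_T|)\) must lie in the range of the supremum in~\eqref{eq:restricted-fenchel}; this holds since \(|S_T|\leq T<a_T\).

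Finiteness of \(F_T^*\) is only needed so the bound is meaningful. It is automatic in any case, because \(F_T\) is positive and the supremum is taken over a compact interval, although the hypothesis lets us avoid discussing continuity. Evenness from condition (b) is used only in the final step, \(F_T(S_T)=F_T(|S_T|)\). Log-convexity and excellence are not needed here; they enter only in the Hilbert lift.
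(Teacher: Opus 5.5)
Your proof is correct and follows the route the paper relies on. The paper defers the scalar proof to the cited DECO paper, but it reproduces the same argument in its proof of the radial Hilbert-space theorem: condition~(c) at index \(t+1\) gives \(c_t h_{t+1}(S_t)\geq F_{t+1}(S_{t+1})-F_t(S_t)\), which telescopes (this is your wealth induction) to \(\sum_{t<T}c_t h_{t+1}(S_t)\geq F_T(S_T)-\varepsilon\), and evenness together with \(|S_T|\leq T\) then passes the bound through the restricted conjugate.

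One small remark on your finiteness comment: finiteness of \(F_T^*\) follows from positivity of \(F_T\) alone, since the supremum is then at most \(\theta T\). Compactness and continuity play no role.
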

See~\cite{Ortega_Jafarkhani_2025} for the proof of~\zcref{thm:oned-coin}.

The following is the only place where excellence enters the proof.
Once this radial bound is available, the decentralized and scalar reductions use only the ordinary coin-betting potential and the stated Lipschitz estimates.
\begin{theorem}[Radial Hilbert-space coin-betting regret]\label{thm:hilbert-coin}
    Let \(\Hil\) be a real Hilbert space and let \(c_t\in\Hil\) satisfy \(\norm{c_t}\leq1\).
    Put \(S_t=\sum_{s=0}^{t-1}c_s\) and predict \(y_t=\operatorname{Bet}_{t+1}(S_t)\) using~\eqref{eq:betting-prediction}.
    If \(P\) is excellent in the square-root sense and has finite restricted conjugates, then for every \(u\in\Hil\) and every integer \(T\geq1\),
    \begin{equation*}
        \begin{aligned}
            \sum_{t=0}^{T-1}
            \br{\langle u,c_t\rangle-\langle c_t,y_t\rangle}
             & \leq
            F_T^*(\norm u)+\varepsilon .
        \end{aligned}
    \end{equation*}
\end{theorem}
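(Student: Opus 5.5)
The plan is the standard wealth-plus-duality argument, with the scalar telescope replaced by a radial one in \(\Hil\). Define the wealth \(W_t=\varepsilon+\sum_{s<t}\langle c_s,y_s\rangle\). The goal is the lower bound \(W_T\geq F_T(\norm{S_T})\) whenever \(\norm{S_T}\leq T\), which always holds since \(\norm{c_t}\leq1\). Given this, the regret statement follows by duality:
\[
\sum_{t<T}\br{\langle u,c_t\rangle-\langle c_t,y_t\rangle}=\langle u,S_T\rangle-W_T+\varepsilon\leq \norm u\norm{S_T}-F_T(\norm{S_T})+\varepsilon\leq F_T^*(\norm u)+\varepsilon,
\]
using Cauchy--Schwarz and the restricted conjugate~\eqref{eq:restricted-fenchel} with \(s=\norm{S_T}\in[0,T]\). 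I would prove the wealth bound by induction in the form \(W_t\geq F_t(\norm{S_t})\); the base case is \(W_0=\varepsilon=F_0(0)\).

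For the inductive step, write \(x=\norm{S_t}\leq t\) and, when \(S_t\neq0\), let \(e=S_t/x\). Decompose \(c_t=ae+w\) with \(w\perp e\), so that \(a^2+\norm w^2\leq1\). Then \(\langle c_t,y_t\rangle=a\,h_{t+1}(x)=a\,\beta_{t+1}(x)F_t(x)\), and hence \(W_{t+1}\geq(1+a\beta_{t+1}(x))F_t(x)\). It remains to show
\[
(1+a\beta_{t+1}(x))F_t(x)\geq F_{t+1}\br{\sqrt{(x+a)^2+\norm w^2}}.
\]
The case \(S_t=0\) is handled separately: there \(y_t=0\) and we need \(F_t(0)\geq F_{t+1}(\norm{c_t})\). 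This follows from (c) with \(c=\pm\norm{c_t}\) and evenness, since \(\beta_{t+1}(0)=0\) by evenness.

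The main obstacle is the displayed inequality, and this is exactly where square-root convexity enters. Set \(\phi(z)=F_{t+1}(\sqrt z)\), which is convex on \([0,R^2]\). The squared norm \(z=(x+a)^2+\norm w^2\) lies on the segment between \((x+a)^2\) and \((x+a)^2+1-a^2\). The idea is to write it as a convex combination of the squared endpoints \((x+1)^2\) and \((x-1)^2\) together with scalar points \((x+c)^2\). The cleanest route I would try first is the following.

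Set \(p=\norm{c_t}\leq 1\). Then \(z=x^2+2ax+p^2\leq x^2+2ax+1=\lambda(x+1)^2+(1-\lambda)(x-1)^2\) with \(\lambda=(1+a)/2\in[0,1]\). The function \(\phi\) is nondecreasing, because \(F\) is even and log-convex and hence nondecreasing on \([0,\infty)\). Combining monotonicity with convexity gives
\[
F_{t+1}(\sqrt z)\leq \tfrac{1+a}{2}F_{t+1}(x+1)+\tfrac{1-a}{2}F_{t+1}(x-1).
\]
The right-hand side equals \(\tfrac{F_{t+1}(x+1)+F_{t+1}(x-1)}{2}\,(1+a\beta_{t+1}(x))\), using the definition of \(\beta\). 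Condition (c) applied with \(c=\pm1\) gives \(F_{t+1}(x\pm1)\leq(1\pm\beta_{t+1}(x))F_t(x)\). Averaging these two yields \(\tfrac{F_{t+1}(x+1)+F_{t+1}(x-1)}{2}\leq F_t(x)\), and this closes the induction.

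A boundary case needs separate care. For \(x<1\), the point \(x-1\) is negative, and its square \((x-1)^2\) must still lie in the convexity range \([0,R^2]\). This is fine by evenness, since \(F_{t+1}(x-1)=F_{t+1}(1-x)\). I would also check that every point stays inside the domain \((-a_{t+1},a_{t+1})\); this holds because \(x+1\leq t+1<a_{t+1}\).
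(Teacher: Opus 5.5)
Your proposal is correct and follows essentially the same route as the paper. Both arguments bound $\norm{S_t+c_t}^2\le\frac{1+a}{2}(x+1)^2+\frac{1-a}{2}(x-1)^2$ with $a=\langle c_t,S_t\rangle/\norm{S_t}$, apply square-root convexity and condition (c) at $c=\pm1$ to get the one-step inequality, telescope, and finish with Cauchy--Schwarz and the restricted conjugate.

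You also make explicit the monotonicity of $z\mapsto F_{t+1}(\sqrt z)$ (from evenness and log-convexity), which the paper's inequality step uses without comment. The one unstated detail on your side is that $1+a\beta_{t+1}(x)\ge0$ (true because $|\beta_{t+1}|<1$) when you multiply the averaged bound. You can avoid it by applying (c) at $c=\pm1$ directly to $\frac{1+a}{2}F_{t+1}(x+1)+\frac{1-a}{2}F_{t+1}(x-1)$.
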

\begin{IEEEproof}
    The lifted betting map is radial:
    \begin{equation*}
        \operatorname{Bet}_t(G)=
        \frac{h_t(\norm{G})}{\norm{G}}G
        \qquad(G\neq0).
    \end{equation*}
    The square-root convexity condition is used only to compare squared norms.
    When \(S_t\neq0\), set \(r=\norm{S_t}\) and \(\zeta=\langle c_t,S_t\rangle/r\).
    Then, \(|\zeta|\leq1\), and
    \begin{equation*}
        \norm{S_t+c_t}^2
        \leq
        \frac{1+\zeta}{2}{(r+1)}^2
        +
        \frac{1-\zeta}{2}{(r-1)}^2.
    \end{equation*}
    Convexity of \(z\mapsto F_{t+1}(\sqrt z)\) and the scalar coin-betting inequality at \(r\) therefore give the one-step reward inequality
    \begin{equation*}
        \langle c_t,\operatorname{Bet}_{t+1}(S_t)\rangle
        \geq
        F_{t+1}(\norm{S_t+c_t})-F_t(\norm{S_t}).
    \end{equation*}
    The case \(S_t=0\) follows directly from the same scalar inequality and evenness of the potential.
    Summing over \(t\) telescopes results in a cumulative reward that is at least \(F_T(\norm{S_T})-\varepsilon\).
    Since \(\norm{S_T}\leq T\), the same Fenchel-conjugate argument as in the scalar theorem, now with \(\theta=\norm{u}\) and \(\langle u,S_T\rangle\leq\norm{u}\norm{S_T}\), yields the bound.
\end{IEEEproof}

\paragraph{Network dynamics}
We next record the deterministic tracking estimates used by \algname{}\@.

\begin{lemma}[Average state tracks the average cumulative gradient]\label{lem:average-tracks}
    For every run and every \(t\),
    \begin{equation*}
        \frac1N\sum_{n=1}^N G_{n,t}
        =
        \sum_{s=0}^{t-1}\frac1N\sum_{n=1}^N (-g_{n,s}).
    \end{equation*}
\end{lemma}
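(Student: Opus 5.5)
The plan is induction on \(t\), tracking the network sum \(\sum_n G_{n,t}\) through one outer round of \zcref{alg:deco-ef}. The base case \(t=0\) is immediate: every \(G_{n,0}=0\) and the right-hand side is an empty sum.

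For the inductive step, fix \(t\) and follow the inner loop. The local movement gives
\[
\sum_n Z_{n,t}^0=\sum_n G_{n,t}-\sum_n g_{n,t}.
\]
Each compressed gossip substep \(k\) then changes the clean state by
\[
\gamma\sum_n\sum_j W_{nj}\bigl(\widehat Z_{j,t}^{k+1}-\widehat Z_{n,t}^{k+1}\bigr).
\]
I would show this sum is zero using only the double stochasticity of \(W\). Swapping the order of summation, the first part equals \(\sum_j\bigl(\sum_n W_{nj}\bigr)\widehat Z_{j,t}^{k+1}=\sum_j\widehat Z_{j,t}^{k+1}\), because the columns of \(W\) sum to one. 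The second part equals \(\sum_n\bigl(\sum_j W_{nj}\bigr)\widehat Z_{n,t}^{k+1}=\sum_n\widehat Z_{n,t}^{k+1}\), because the rows sum to one. The two parts therefore cancel.

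It matters that this cancellation holds regardless of the tracker values. The trackers may carry arbitrary compression error, and the realization of \(\omega_{n,t}^k\) is irrelevant. This is why the claim holds for every run and not merely in expectation.

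Iterating over \(k=0,\dots,q(t)-1\) gives
\[
\sum_n G_{n,t+1}=\sum_n Z_{n,t}^{q(t)}=\sum_n G_{n,t}-\sum_n g_{n,t}.
\]
Dividing by \(N\) and applying the induction hypothesis yields the statement at \(t+1\). If \(q(t)=0\), the loop is empty and the same identity holds trivially.

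There is no real obstacle here. The only care needed is that the gossip correction is written in terms of pairwise differences of the shared trackers \(\widehat Z^{k+1}\), rather than as compressed clean states. This differencing form is exactly what makes the network average invariant under both compression and gossip.
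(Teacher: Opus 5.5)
Your proposal is correct and follows the same route as the paper: induction on \(t\), the local movement adding \(-\sum_n g_{n,t}\), and each gossip correction summing to zero over \(n\) by the row and column stochasticity of \(W\). Your explicit treatment of the empty loop (\(q(0)=\lceil c\cdot 0\rceil=0\)) and your observation that the cancellation holds for every compressor realization are useful details that the paper leaves implicit.
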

\begin{IEEEproof}
    At \(t=0\) both sides are zero.
    Suppose the identity holds at time \(t\).
    The local movement forms \(G_t+\delta_t\), where \(\delta_{n,t}=-g_{n,t}\).
    Each compressed gossip substep changes \(G\) by
    \begin{equation*}
        \gamma\sum_j W_{nj}(\widehat Z_j-\widehat Z_n).
    \end{equation*}
    Summing this correction over \(n\) gives zero by the row and column stochasticity of \(W\).
    Hence, every compressed gossip substep preserves the network average of \(G\).
    Thus, the average at time \(t+1\) is the previous average plus \(\frac1N\sum_n(-g_{n,t})\), proving the induction step.
\end{IEEEproof}

\begin{lemma}[Lyapunov tail unrolling]\label{lem:lyapunov-tail-unroll}
    Let \(q:\mathbb N\to\mathbb N\), \(P\geq0\), and \(\eta\geq0\).
    Define \(Q(s,t)=\sum_{r=s}^{t-1}q(r)\).
    If \(a_0=0\) and
    \begin{equation*}
        a_{t+1}\leq \eta^{q(t)}(a_t+P)
    \end{equation*}
    for every \(t\), then
    \begin{equation*}
        a_t\leq P\sum_{s<t}\eta^{Q(s,t)}
    \end{equation*}
    for every \(t\).
\end{lemma}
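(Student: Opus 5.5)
We argue by induction on \(t\), and we aim to prove the stated bound \(a_t\leq P\sum_{s<t}\eta^{Q(s,t)}\) directly. Write \(B_t=P\sum_{s<t}\eta^{Q(s,t)}\), with the convention that the empty sum gives \(B_0=0\).

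The base case \(t=0\) holds because \(a_0=0=B_0\).

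For the inductive step, assume \(a_t\leq B_t\). Since \(\eta\geq0\), the factor \(\eta^{q(t)}\) is nonnegative. Multiplying the inductive hypothesis by it therefore preserves the inequality, and the recursion gives
\begin{equation*}
    a_{t+1}\leq\eta^{q(t)}(a_t+P)\leq\eta^{q(t)}\br{P\sum_{s<t}\eta^{Q(s,t)}+P}.
\end{equation*}

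It remains to show that the right-hand side equals \(B_{t+1}\). Two additivity facts do this. First, \(Q(s,t+1)=Q(s,t)+q(t)\) for \(s<t\), so \(\eta^{q(t)}\eta^{Q(s,t)}=\eta^{Q(s,t+1)}\). Second, \(Q(t,t+1)=q(t)\), so the lone term \(\eta^{q(t)}P\) is exactly the \(s=t\) summand \(P\eta^{Q(t,t+1)}\).

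Combining these, the right-hand side becomes \(P\sum_{s<t+1}\eta^{Q(s,t+1)}=B_{t+1}\), which closes the induction.

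The only delicate point is the convention \(\eta^0=1\) when \(\eta=0\) and \(q(t)=0\). We adopt this convention; with it the exponent identities \(\eta^{a}\eta^{b}=\eta^{a+b}\) hold for all nonnegative integers, and no other step needs care.
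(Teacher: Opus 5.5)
Your proof is correct and matches the paper's argument. Both proceed by induction on \(t\): substitute the hypothesis into the recursion, then use \(Q(s,t+1)=Q(s,t)+q(t)\) for \(s<t\) and \(Q(t,t+1)=q(t)\) so the extra term becomes the \(s=t\) summand. Your remarks on the nonnegativity of \(\eta^{q(t)}\) and the convention \(\eta^0=1\) make explicit two points the paper leaves implicit.
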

\begin{IEEEproof}
    The claim is immediate at \(t=0\).
    For the induction step, use \(Q(s,t+1)=Q(s,t)+q(t)\) for \(s<t\) and \(Q(t,t+1)=q(t)\):
    \begin{equation*}
        \begin{aligned}
            a_{t+1}
             & \leq
            \eta^{q(t)}
            \left(P\sum_{s<t}\eta^{Q(s,t)}+P\right) \\
             & =
            P\sum_{s<t}\eta^{Q(s,t+1)}
            +P\eta^{Q(t,t+1)}
            =
            P\sum_{s<t+1}\eta^{Q(s,t+1)} .
        \end{aligned}
    \end{equation*}
\end{IEEEproof}

\begin{lemma}[Raw compressed-gossip  recurrences]\label{lem:raw-tracking-recurrences}
    Let \(W\) be a valid gossip matrix for a connected communication graph, and let \(L=I-W\).
    Suppose the Hilbert lift of \(W\) contracts the zero-mean subspace by \(0\leq\rho<1\), and put \(\mu=1-\rho\).
    Fix one compressed gossip substep with \(0<\gamma\leq1\).
    Define
    \begin{equation*}
        e^k=Z^k-\widehat Z^k,
        \qquad
        m^k=\mathcal C(\omega^k,e^k),
        \qquad
        r^k=e^k-m^k,
    \end{equation*}
    and let
    \begin{equation*}
        D^k=\norm{\Pperp Z^k}_{2,\Hil},
        \qquad
        E^k=\norm{e^k}_{2,\Hil}.
    \end{equation*}
    Then the inner-loop update satisfies
    \begin{align}
        D^{k+1}
         & \leq
        (1-\gamma\mu)D^k+2\gamma\norm{r^k}_{2,\Hil},
        \label{eq:raw-D-recursion} \\
        E^{k+1}
         & \leq
        2\gamma D^k+(1+2\gamma)\norm{r^k}_{2,\Hil}.
        \label{eq:raw-E-recursion}
    \end{align}
\end{lemma}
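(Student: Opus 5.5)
Both recurrences come from writing one inner substep in matrix form. With \(L=I-W\), the relaxed gossip line of \zcref{alg:deco-ef} reads \(Z^{k+1}=Z^k-\gamma L\widehat Z^{k+1}\). The tracker update gives \(\widehat Z^{k+1}=\widehat Z^k+m^k=Z^k-e^k+m^k=Z^k-r^k\). Substituting,
\begin{equation*}
    Z^{k+1}=\br{I-\gamma L}Z^k+\gamma L r^k
    =\br{(1-\gamma)I+\gamma W}Z^k+\gamma L r^k .
\end{equation*}
This identity is exact and deterministic given the compressor draw, so both bounds will follow from the triangle inequality in the product norm \(\norm{\cdot}_{2,\Hil}\).

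For \eqref{eq:raw-D-recursion}, I apply \(\Pperp\), which commutes with \(W\) because \(W\) is doubly stochastic. This gives
\begin{equation*}
    \Pperp Z^{k+1}=\br{(1-\gamma)I+\gamma W}\Pperp Z^k+\gamma\Pperp L r^k .
\end{equation*}
Since \(\Pperp Z^k\) has zero mean, the contraction property gives \(\norm{W\Pperp Z^k}_{2,\Hil}\leq\rho D^k\). With \(\gamma\leq1\), the coefficient \(1-\gamma\) is nonnegative, so the first term is at most \(\br{(1-\gamma)+\gamma\rho}D^k=(1-\gamma\mu)D^k\). For the second term, \(\Pperp\) is an orthogonal projection and hence nonexpansive. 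Also \(\norm{L}\leq\norm{I}+\norm{W}\leq2\) by nonexpansiveness of the Hilbert lift of \(W\). Together these give \(\gamma\norm{\Pperp Lr^k}_{2,\Hil}\leq2\gamma\norm{r^k}_{2,\Hil}\).

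For \eqref{eq:raw-E-recursion}, I compute \(e^{k+1}=Z^{k+1}-\widehat Z^{k+1}\). Using \(\widehat Z^{k+1}=Z^k-r^k\), this is
\begin{equation*}
    e^{k+1}=Z^{k+1}-Z^k+r^k=-\gamma L Z^k+\gamma Lr^k+r^k .
\end{equation*}
The constant vector \(JZ^k\) lies in the kernel of \(L\), so \(LZ^k=L\Pperp Z^k\) and \(\norm{LZ^k}_{2,\Hil}\leq2D^k\). The remaining terms give \(\norm{(I+\gamma L)r^k}_{2,\Hil}\leq(1+2\gamma)\norm{r^k}_{2,\Hil}\). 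This yields \eqref{eq:raw-E-recursion}.

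The main point needing care is the operator-norm claims in the Hilbert lift:
\begin{itemize}
    \item \(\Pperp\) commutes with \(W\), which needs both row and column stochasticity;
    \item \(\norm{L}\leq2\) on all of \(\Hil^N\), not just on the zero-mean subspace;
    \item \(L\) annihilates consensus vectors.
\end{itemize}
Each follows coordinatewise from the stated nonexpansive and contraction properties and from \(W\mathbf 1=\mathbf 1\). No randomness is involved at this stage. The expectation over \(\omega^k\) enters only later, when \(\norm{r^k}_{2,\Hil}\) is bounded by \(\bar\sigma E^k\) in conditional mean.
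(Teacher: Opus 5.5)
Your proposal is correct and follows the paper's proof essentially step for step: the identity \(\widehat Z^{k+1}=Z^k-r^k\), the rewrite \(Z^{k+1}=(I-\gamma L)Z^k+\gamma Lr^k\), the contraction of \((1-\gamma)I+\gamma W\) on zero-mean vectors together with \(\norm{LU}_{2,\Hil}\leq2\norm{U}_{2,\Hil}\), and the residual identity \(e^{k+1}=r^k-\gamma L(Z^k-r^k)\) with \(LZ^k=L\Pperp Z^k\). The only difference is cosmetic: you spell out the commutation \(\Pperp W=W\Pperp\) and the nonnegativity of \(1-\gamma\), which the paper leaves implicit in its phrase ``on zero-mean vectors.''
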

\begin{IEEEproof}
    Since \(W\) is doubly stochastic, \(LJ=0\), so \(LZ=L\Pperp Z\).
    Product-norm nonexpansiveness of \(W\) also gives \(\norm{LU}_{2,\Hil}\le2\norm{U}_{2,\Hil}\).
    The tracker update is
    \begin{equation*}
        \widehat Z^{k+1}=\widehat Z^k+m^k=Z^k-r^k,
    \end{equation*}
    hence
    \begin{equation*}
        Z^{k+1}
        =
        Z^k-\gamma L\widehat Z^{k+1}
        =
        (I-\gamma L)Z^k+\gamma Lr^k .
    \end{equation*}
    On zero-mean vectors, \(I-\gamma L=(1-\gamma)I+\gamma W\) contracts by \(1-\gamma+\gamma\rho=1-\gamma\mu\), and \(\norm{Lr^k}_{2,\Hil}\le2\norm{r^k}_{2,\Hil}\), which gives \zcref{eq:raw-D-recursion}.
    For the residual,
    \begin{equation*}
        Z^{k+1}-\widehat Z^{k+1}
        =
        r^k-\gamma L(Z^k-r^k).
    \end{equation*}
    Using \(LZ^k=L\Pperp Z^k\) and \(\norm{LU}_{2,\Hil}\le2\norm{U}_{2,\Hil}\) gives \zcref{eq:raw-E-recursion}.
\end{IEEEproof}

\begin{theorem}[Expected compressed tracker contraction]\label{thm:expected-tracking-contraction}
    Let \(W\) be a valid gossip matrix for a connected communication graph.
    Let \(0\leq\rho<1\) be the resulting product-\(\ell_2\) zero-mean contraction factor.
    Its Hilbert lift satisfies 
    \begin{equation*}
        \begin{aligned}
            \norm{WZ}_{2,\Hil}
             & \leq\norm{Z}_{2,\Hil},     \\
            \norm{WZ}_{2,\Hil}
             & \leq \rho\norm{Z}_{2,\Hil}
            \quad\text{for zero-mean }Z,
            \qquad 0\leq\rho<1 .
        \end{aligned}
    \end{equation*}
    Put
    \begin{equation*}
        \mu=1-\rho,\qquad \bar\sigma=\sqrt{\alphastoch}.
    \end{equation*}
    Assume
    \begin{equation*}
        0<\gamma\leq
        \min\left\{
        1,\,
        \frac{1-\bar\sigma}{
            2\bar\sigma\left(1+\frac{4}{\mu}\right)+\frac{\mu}{2}}
        \right\}.
    \end{equation*}
    Let
    \begin{equation*}
        \chi=\frac{\mu}{4},
        \qquad
        \etastoch=1-\frac{\gamma\mu}{2}.
    \end{equation*}
    For one compressed gossip substep, use \(D^k\) and \(E^k\) from \zcref{eq:tracking-components} and define
    \begin{equation}
        V^k=D^k+\chi E^k.
        \label{eq:compressed-lyapunov}
    \end{equation}
    Suppose the compressor randomness is fresh conditionally on the current payloads, so that
    \begin{equation*}
        \Exp\!\left[\norm{r^k}_{2,\Hil}\mid Z^k,\widehat Z^k\right]\leq
        \sqrt{\alphastoch}\norm{e^k}_{2,\Hil}.
    \end{equation*}
    Then one step satisfies
    \begin{equation*}
        \Exp\!\left[V^{k+1}\mid Z^k,\widehat Z^k\right]
        \leq
        \etastoch V^k .
    \end{equation*}
\end{theorem}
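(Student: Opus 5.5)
The plan is to combine the two raw recurrences of \zcref{lem:raw-tracking-recurrences} linearly with weights \(1\) and \(\chi\), take conditional expectations, and check that the resulting coefficients are dominated by \(\etastoch\). Adding \(\chi\) times \eqref{eq:raw-E-recursion} to \eqref{eq:raw-D-recursion} gives the pathwise bound
\begin{equation*}
    V^{k+1}
    \leq
    \br{1-\gamma\mu+2\gamma\chi}D^k
    +\br{2\gamma+\chi(1+2\gamma)}\norm{r^k}_{2,\Hil}.
\end{equation*}
With \(\chi=\mu/4\), the first coefficient equals \(1-\gamma\mu/2=\etastoch\). This is why the Lyapunov weight \(\chi=\mu/4\) and the effective rate \(\etastoch\) are paired.

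Next we take \(\Exp[\cdot\mid Z^k,\widehat Z^k]\). Since \(D^k\) is measurable with respect to the conditioning, only the residual term needs attention. By the assumed conditional bound \(\Exp[\norm{r^k}_{2,\Hil}\mid Z^k,\widehat Z^k]\leq\bar\sigma E^k\), it follows that
\begin{equation*}
    \Exp[V^{k+1}\mid Z^k,\widehat Z^k]
    \leq
    \etastoch D^k+\bar\sigma\br{2\gamma+\chi(1+2\gamma)}E^k .
\end{equation*}
The fresh-randomness hypothesis enters exactly here. One may note that it follows from \zcref{def:compressor} by Jensen's inequality applied agentwise: \(\Exp\norm{r^k}\leq(\Exp\norm{r^k}^2)^{1/2}\leq\bar\sigma E^k\), using independence of the per-agent compressor draws given the payloads. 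Since it is stated as a hypothesis, however, we simply use it.

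It therefore remains to show \(\bar\sigma(2\gamma+\chi(1+2\gamma))\leq\etastoch\chi\). Dividing by \(\chi=\mu/4\), this becomes
\begin{equation*}
    \bar\sigma\br{\frac{8\gamma}{\mu}+1+2\gamma}\leq 1-\frac{\gamma\mu}{2},
\end{equation*}
which rearranges to
\begin{equation*}
    \gamma\br{2\bar\sigma\br{1+\frac4\mu}+\frac\mu2}\leq 1-\bar\sigma .
\end{equation*}
This is precisely the step-size condition assumed. The main obstacle is simply this bookkeeping: the \(\chi\cdot 1\cdot\bar\sigma\) term forces \(\bar\sigma<1\), and the factor \(1+4/\mu\) must match the constant in the hypothesis exactly. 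If the constants did not line up, we would first revisit the \(\norm{LU}\leq2\norm U\) bounds in the raw recurrences. With the inequality verified, both coefficients are at most \(\etastoch\), so \(\Exp[V^{k+1}\mid Z^k,\widehat Z^k]\leq\etastoch(D^k+\chi E^k)=\etastoch V^k\). Note also that \(\gamma\leq1\) and \(\mu\leq1\) ensure \(\etastoch\in[1/2,1)\).
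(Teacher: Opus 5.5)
Your proposal is correct and is essentially the paper's proof. You combine the two raw recurrences with weights $1$ and $\chi$, use $\chi=\mu/4$ to make the $D^k$ coefficient equal $\etastoch$, bound the residual term in conditional expectation by $\bar\sigma E^k$, and recognize the step-size hypothesis as the rearrangement of $\bar\sigma\bigl(2\gamma+\chi(1+2\gamma)\bigr)\leq\chi\etastoch$. The paper instead takes conditional expectations of each recurrence before combining, which makes no difference. As a minor aside, your Jensen remark does not need independence of the per-agent compressor draws, since $\Exp\norm{r^k}_{2,\Hil}^2=\sum_n\Exp\norm{r_n^k}^2$ holds by linearity alone.
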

\begin{IEEEproof}
    Apply the raw recurrences in \zcref{lem:raw-tracking-recurrences} to the current inner-loop substep.
    The mean-square compressor condition gives the stated vector residual estimate.
    Indeed, conditionally on \(e^k\),
    \begin{equation*}
        \Exp\!\left[\norm{r^k}_{2,\Hil}^2\mid Z^k,\widehat Z^k\right]
        =
        \sum_n\Exp\norm{r_n^k}^2
        \leq
        \alphastoch\norm{e^k}_{2,\Hil}^2 .
    \end{equation*}
    Jensen's inequality therefore gives
    \(\Exp[\norm{r^k}_{2,\Hil}\mid Z^k,\widehat Z^k]\leq\bar\sigma E^k\).
    Taking conditional expectations in \zcref{eq:raw-D-recursion,eq:raw-E-recursion},
    \begin{align}
        \Exp[D^{k+1}\mid Z^k,\widehat Z^k]
         & \leq
        (1-\gamma\mu)D^k+2\gamma\bar\sigma E^k,
        \label{eq:expected-D-recursion} \\
        \Exp[E^{k+1}\mid Z^k,\widehat Z^k]
         & \leq
        2\gamma D^k+\bar\sigma(1+2\gamma)E^k.
        \label{eq:expected-E-recursion}
    \end{align}
    Combining these recurrences with \(V^k=D^k+\chi E^k\),
    \begin{align*}
        \Exp[V^{k+1}\mid Z^k,\widehat Z^k]
         & \leq
        (1-\gamma\mu+2\chi\gamma)D^k \\
         & \quad+
        (2\gamma\bar\sigma+\chi\bar\sigma(1+2\gamma))E^k .
    \end{align*}
    With \(\chi=\mu/4\), the \(D^k\) coefficient becomes \(1-\gamma\mu/2=\etastoch\).
    The stated bound on \(\gamma\) is exactly the rearranged condition
    \begin{equation*}
        2\gamma\bar\sigma+\chi\bar\sigma(1+2\gamma)
        \leq
        \chi\etastoch .
    \end{equation*}
    Thus the \(E^k\) coefficient is at most \(\chi\etastoch\), and the displayed Lyapunov contraction follows.
\end{IEEEproof}

\begin{lemma}[Expected one-round Lyapunov recurrence]\label{lem:expected-lyapunov-recurrence}
    Under the hypotheses of \zcref{thm:expected-tracking-contraction}, let \(\mathcal F_t\) be the history before the compressor randomness in Round \(t\), and let \(V_t\) be the outer-round Lyapunov quantity.
    Then,
    \begin{equation*}
        \Exp[V_{t+1}\mid\mathcal F_t]
        \leq
        \etastoch^{q(t)}
        \left(V_t+(2+\chi)\sqrt N\right).
    \end{equation*}
\end{lemma}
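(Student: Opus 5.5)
The plan has three steps: bound the jump in the Lyapunov quantity caused by the local gradient step, iterate the one-substep contraction of \zcref{thm:expected-tracking-contraction} over the \(q(t)\) gossip substeps, and combine the two with the tower property.

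The outer-round Lyapunov quantity is \(V_t=\norm{\Pperp G_t}_{2,\Hil}+\chi\norm{G_t-\widehat G_t}_{2,\Hil}\). The inner loop starts from \(Z^0_t=G_t+\delta_t\) and \(\widehat Z^0_t=\widehat G_t\), and ends at \(G_{t+1}=Z^{q(t)}_t\), \(\widehat G_{t+1}=\widehat Z^{q(t)}_t\). So \(V_{t+1}\) is exactly the inner quantity \(V^{q(t)}\) of \zcref{eq:compressed-lyapunov}.

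\textbf{Injection bound.} I will first show deterministically that
\begin{equation*}
V^0\leq V_t+(2+\chi)\sqrt N .
\end{equation*}
Since \(\Pperp\) is an orthogonal projection in the product norm, it is nonexpansive. The triangle inequality then gives \(D^0\leq\norm{\Pperp G_t}_{2,\Hil}+\norm{\delta_t}_{2,\Hil}\). Each \(\norm{\delta_{n,t}}=\norm{g_{n,t}}\leq1\) by the \(1\)-Lipschitz assumption, so \(\norm{\delta_t}_{2,\Hil}\leq\sqrt N\). Similarly, \(E^0=\norm{G_t+\delta_t-\widehat G_t}_{2,\Hil}\leq\norm{G_t-\widehat G_t}_{2,\Hil}+\sqrt N\).

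Together these give \(V^0\leq V_t+(1+\chi)\sqrt N\), which is already within the stated \((2+\chi)\sqrt N\). The extra slack in the constant is harmless, and it could absorb a cruder bound such as \(\norm{\Pperp\delta}\leq2\norm{\delta}\) if one avoided the orthogonality argument.

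\textbf{Iterating the contraction.} Let \(\mathcal F_{t,k}\) be the history up to substep \(k\) of round \(t\). The compressor randomness \(\omega^k_{n,t}\) is fresh given \(\mathcal F_{t,k}\), so the conditional residual hypothesis of \zcref{thm:expected-tracking-contraction} holds. That theorem gives \(\Exp[V^{k+1}\mid\mathcal F_{t,k}]\leq\etastoch V^k\). It is stated conditionally on \((Z^k,\widehat Z^k)\), but the same proof works conditionally on the whole history, because \(V^{k+1}\) depends on the past only through \((Z^k,\widehat Z^k)\) and the fresh \(\omega^k\).

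The tower property over \(k=q(t)-1,\dots,0\) then gives \(\Exp[V^{q(t)}\mid\mathcal F_{t,0}]\leq\etastoch^{q(t)}V^0\). Here \(\mathcal F_{t,0}\supseteq\mathcal F_t\) contains the gradients of round \(t\), since the prediction \(x_{n,t}\) and the gradient \(g_{n,t}\) are determined before any compressor draw in round \(t\). So \(V^0\) is \(\mathcal F_t\)-measurable. Combining this with the injection bound and conditioning on \(\mathcal F_t\) yields the claim.

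\textbf{Main obstacle.} The delicate point is the measurability and conditioning bookkeeping: the gradients may depend adaptively on earlier compressor outputs through the predictions. This is handled by defining \(\mathcal F_t\) to include everything up to the gradient observation of round \(t\). One could also worry about integrability of \(V\), but it is bounded by induction, since each step adds at most a bounded amount.
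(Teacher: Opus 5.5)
Your proof is correct and follows the paper's route: a deterministic injection bound obtained from the triangle inequality and \(\norm{\delta_t}_{2,\Hil}\le\sqrt N\), followed by iterating the one-substep contraction of the expected tracker theorem over the \(q(t)\) fresh substeps via the tower property. Your observation that \(\Pperp\) is an orthogonal projection gives the slightly sharper injection \((1+\chi)\sqrt N\), where the paper settles for the cruder \((2+\chi)\sqrt N\). Your closing remark that \(V\) is bounded by induction is not justified for a general mean-square compressor, whose outputs need not be almost surely bounded. It is also unnecessary, since \(V\geq0\) already makes the conditional expectations and the tower property well defined.
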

\begin{IEEEproof}
    At the start of Round \(t\),
    \(Z_t^0=G_t+\delta_t\), \(\widehat Z_t^0=\widehat G_t\), and \(\delta_{n,t}=-g_{n,t}\).
    Since the losses are \(1\)-Lipschitz, \(\norm{\delta_t}_{2,\Hil}\le\sqrt N\).
    The triangle inequality increases \(\norm{\Pperp G_t}_{2,\Hil}\) by at most \(2\sqrt N\) and \(\norm{G_t-\widehat G_t}_{2,\Hil}\) by at most \(\sqrt N\), so \(V_t^0\le V_t+(2+\chi)\sqrt N\).
    Applying \zcref{thm:expected-tracking-contraction} to each of the \(q(t)\) fresh inner steps and using the tower property gives the claim.
\end{IEEEproof}

\begin{lemma}[Expected compressed state-disagreement unrolling]\label{lem:expected-state-disagreement-unrolling}
    Under the expected hypotheses, with \(Q(s,t)=\sum_{r=s}^{t-1}q(r)\),
    \begin{equation*}
        \Exp\frac1N\sum_{n=1}^N\norm{G_{n,t}-\bar G_t}
        \leq
        (2+\chi)\sqrt N
        \sum_{s<t}\etastoch^{Q(s,t)} .
    \end{equation*}
\end{lemma}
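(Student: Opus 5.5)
The plan is to combine the expected one-round Lyapunov recurrence of \zcref{lem:expected-lyapunov-recurrence} with the deterministic unrolling of \zcref{lem:lyapunov-tail-unroll}. We then convert the Lyapunov quantity back into the averaged state disagreement by Cauchy--Schwarz.

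First, define the outer-round Lyapunov quantity at the start of Round \(t\) as
\[
V_t=\norm{\Pperp G_t}_{2,\Hil}+\chi\norm{G_t-\widehat G_t}_{2,\Hil}.
\]
Since both states are initialized to zero, \(V_0=0\). Set \(a_t=\Exp V_t\), where the expectation is over all compressor randomness up to Round \(t\). Taking total expectation in \zcref{lem:expected-lyapunov-recurrence} and using the tower property gives
\[
a_{t+1}\le\etastoch^{q(t)}\bigl(a_t+(2+\chi)\sqrt N\bigr).
\]
Here it matters that the injection bound \((2+\chi)\sqrt N\) is deterministic. It holds for any realization because the subgradients have norm at most one, regardless of how \(g_{n,t}\) depends on the past randomness through \(x_{n,t}\). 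The factor \(\etastoch^{q(t)}\) is also deterministic. Hence the recursion is a clean scalar inequality, and \zcref{lem:lyapunov-tail-unroll} with \(P=(2+\chi)\sqrt N\) and \(\eta=\etastoch\) yields
\[
\Exp V_t\le(2+\chi)\sqrt N\sum_{s<t}\etastoch^{Q(s,t)}.
\]

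Second, we pass from \(V_t\) to the target quantity. By Lemma~\ref{lem:average-tracks}, \(\bar G_t\) is exactly the network average of \(G_t\). Therefore \(G_{n,t}-\bar G_t={(\Pperp G_t)}_n\), and Cauchy--Schwarz gives
\[
\frac1N\sum_n\norm{G_{n,t}-\bar G_t}\le\frac{1}{\sqrt N}\norm{\Pperp G_t}_{2,\Hil}.
\]
Since \(\chi E\ge0\), this is at most \(V_t/\sqrt N\le V_t\). Taking expectations proves the claim. The result even holds with an extra factor \(1/\sqrt N\); the stated bound is the weaker form and follows a fortiori.

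The main obstacle is justifying the expectation step rigorously. \zcref{thm:expected-tracking-contraction} is conditional on the inner-loop history, so we must iterate it over the \(q(t)\) substeps with nested conditioning. The compressor draws \(\omega^k_{n,t}\) must be fresh given \(\mathcal F_{t,k}\), and the predictions and gradients of Round \(t\) must be \(\mathcal F_t\)-measurable. This is already packaged in \zcref{lem:expected-lyapunov-recurrence}, so the only remaining point is that taking full expectations preserves the linear recursion. That holds because all coefficients are nonnegative constants, which lets \zcref{lem:lyapunov-tail-unroll} apply directly to \(a_t\).
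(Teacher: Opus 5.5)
Your proof is correct and follows the paper's argument. You take total expectations in \zcref{lem:expected-lyapunov-recurrence}, unroll with \zcref{lem:lyapunov-tail-unroll} using \(a_0=0\) and \(P=(2+\chi)\sqrt N\), and then bound the averaged disagreement pointwise by \(\norm{\Pperp G_t}_{2,\Hil}\le V_t\). Your Cauchy--Schwarz step simply keeps the extra factor \(1/\sqrt N\) that the paper discards.
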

\begin{IEEEproof}
    Let \(a_t=\Exp V_t\) and \(P=(2+\chi)\sqrt N\).
    Taking expectation in \zcref{lem:expected-lyapunov-recurrence} gives
    \begin{equation*}
        a_{t+1}\leq \etastoch^{q(t)}(a_t+P),
        \qquad a_0=0.
    \end{equation*}
    The unrolling in \zcref{lem:lyapunov-tail-unroll} applies to this scalar recurrence.
    Finally,
    \begin{equation*}
        \frac1N\sum_n\norm{G_{n,t}-\bar G_t}
        \leq
        \norm{\Pperp G_t}_{2,\Hil}
        \leq V_t
    \end{equation*}
    for every realization, so the same bound holds after expectation.
\end{IEEEproof}

\paragraph{Disagreement and network regret}
The regret proof separates the centralized radial coin-betting term from the disagreement induced by nonidentical agent predictions.
The regret assembly uses the global-loss comparison in \zcref{lem:global-loss-centralized-comparison} directly.
The following summation lemma is the shared tail estimate used by both potentials.

\begin{lemma}[Square-root disagreement criterion]\label{lem:sqrt-disagreement}
    Let \(L(t)\) be a bound on the prediction-map Lipschitz constant at time \(t\), and define
    \begin{equation*}
        Q(s,t)=\sum_{r=s}^{t-1}q(r).
    \end{equation*}
    If, for all \(t\geq1\),
    \begin{equation*}
        L(t)\sum_{s<t}\eta^{Q(s,t)}
        \leq
        \frac{C_{\mathrm{lip}}}{\sqrt t},
    \end{equation*}
    then
    \begin{equation*}
        2\sqrt N\sum_{t=1}^T
        L(t)\sum_{s<t}\eta^{Q(s,t)}
        \leq
        4C_{\mathrm{lip}}\sqrt N\sqrt T.
    \end{equation*}
\end{lemma}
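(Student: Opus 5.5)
The plan is a direct termwise substitution followed by an elementary sum estimate. The hypothesis gives, for every \(t\geq1\),
\[
L(t)\sum_{s<t}\eta^{Q(s,t)}\leq \frac{C_{\mathrm{lip}}}{\sqrt t}.
\]
Multiplying by the nonnegative factor \(2\sqrt N\) and summing over \(t=1,\dots,T\) yields
\[
2\sqrt N\sum_{t=1}^T L(t)\sum_{s<t}\eta^{Q(s,t)}
\leq
2C_{\mathrm{lip}}\sqrt N\sum_{t=1}^T\frac{1}{\sqrt t}.
\]
This step uses no structure of \(q\), \(\eta\), or \(L\) beyond the assumed pointwise inequality. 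Implicitly it also uses that every summand is nonnegative, which holds because \(L(t)\geq0\) and \(\eta\geq0\).

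The remaining step is the standard bound \(\sum_{t=1}^T t^{-1/2}\leq 2\sqrt T\). I will prove it by comparing each term with an integral. Since \(x\mapsto x^{-1/2}\) is decreasing, \(t^{-1/2}\leq\int_{t-1}^{t}x^{-1/2}\,dx\) for \(t\geq1\). Summing over \(t\) telescopes to \(\int_0^T x^{-1/2}\,dx=2\sqrt T\).

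An alternative that avoids the improper integral at \(0\) is the termwise inequality
\[
\frac1{\sqrt t}\leq\frac{2}{\sqrt t+\sqrt{t-1}}=2\br{\sqrt t-\sqrt{t-1}},
\]
which telescopes to \(2\sqrt T\). Either route combined with the first display gives exactly \(4C_{\mathrm{lip}}\sqrt N\sqrt T\).

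There is no real obstacle here. The only point to check is that the index range \(t\geq1\) in the hypothesis matches the summation range \(t=1,\dots,T\), so that no \(t=0\) term needs separate treatment. The lemma is purely a bookkeeping device. The substantive work, verifying the per-round hypothesis \(L(t)\sum_{s<t}\eta^{Q(s,t)}\leq C_{\mathrm{lip}}/\sqrt t\) for the KT and SE envelopes under the linear schedule \(q(t)=\lceil ct\rceil\), is deferred to the potential-specific estimates.
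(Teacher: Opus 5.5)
Your proposal is correct and follows the paper's proof: substitute the pointwise hypothesis term by term, then bound $\sum_{t=1}^T t^{-1/2}\leq 2\sqrt T$ by integral comparison (your telescoping alternative works equally well). The nonnegativity remark is not needed, because summing termwise inequalities $a_t\leq b_t$ is valid regardless of sign; only the factor $2\sqrt N\geq 0$ matters.
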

\begin{IEEEproof}
    Apply the assumed pointwise bound term by term:
    \begin{equation*}
        2\sqrt N\sum_{t=1}^T
        L(t)\sum_{s<t}\eta^{Q(s,t)}
        \leq
        2C_{\mathrm{lip}}\sqrt N
        \sum_{t=1}^T\frac1{\sqrt t}.
    \end{equation*}
    The integral comparison
    \(\sum_{t=1}^T t^{-1/2}\leq2\sqrt T\)
    gives the result.
\end{IEEEproof}

\begin{theorem}[General network regret template]\label{thm:general-regret-proof}
    Let
    \begin{equation*}
        \bar g_t=\frac1N\sum_{n=1}^N g_{n,t},
        \quad
        \bar G_t=\frac1N\sum_{n=1}^N G_{n,t},
        \quad
        y_t=\operatorname{Bet}_{t+1}(\bar G_t).
    \end{equation*}
    Suppose the centralized radial coin-betting term satisfies
    \begin{equation*}
        \begin{aligned}
            \sum_{t=0}^{T-1}
            \langle \bar g_t,y_t-u\rangle
             & \leq
            F_T^*(\norm u)+\varepsilon
        \end{aligned}
    \end{equation*}
    for every \(u\) and every integer \(T\geq1\).
    Suppose also that
    \begin{equation*}
        \sum_{t=0}^{T-1}\frac1N\sum_{n=1}^N\norm{x_{n,t}-y_t}
        \leq C_{\mathrm{net}}\sqrt T .
    \end{equation*}
    Then, for every \(u\in\Hil\) and every integer \(T\geq1\),
    \begin{equation*}
        \begin{aligned}
            R_T^{\mathrm{net}}(u)
             & \leq
            F_T^*(\norm u)+\varepsilon
            +3C_{\mathrm{net}}\sqrt T .
        \end{aligned}
    \end{equation*}
\end{theorem}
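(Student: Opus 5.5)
The plan is to prove the one-round inequality \eqref{eq:main-regret-split-round}, sum it over \(t=0,\dots,T-1\), and then insert the two hypotheses. Both hypotheses are assumed in the statement, so all the work lies in the per-round comparison (the content of \zcref{lem:global-loss-centralized-comparison}). Throughout I use the standing assumptions that each \(\ell_{n,t}\) is convex and \(1\)-Lipschitz, so every subgradient has norm at most one, and that \(g_{n,t}\in\partial\ell_{n,t}(x_{n,t})\).

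Fix \(t\), \(n\) and a comparator \(u\). I split each term as
\[
\bar\ell_t(x_{n,t})-\bar\ell_t(u)=\bigl(\bar\ell_t(x_{n,t})-\bar\ell_t(y_t)\bigr)+\bigl(\bar\ell_t(y_t)-\bar\ell_t(u)\bigr).
\]
The first bracket is at most \(\norm{x_{n,t}-y_t}\), because \(\bar\ell_t\) is an average of \(1\)-Lipschitz functions.

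For the second bracket, I write \(\bar\ell_t(y_t)-\bar\ell_t(u)=\frac1N\sum_m(\ell_{m,t}(y_t)-\ell_{m,t}(u))\) and treat each agent \(m\) separately. First, \(\ell_{m,t}(y_t)\le\ell_{m,t}(x_{m,t})+\norm{y_t-x_{m,t}}\) by Lipschitzness. Second, convexity at \(x_{m,t}\) with subgradient \(g_{m,t}\) gives
\[
\ell_{m,t}(x_{m,t})-\ell_{m,t}(u)\le\langle g_{m,t},x_{m,t}-u\rangle=\langle g_{m,t},y_t-u\rangle+\langle g_{m,t},x_{m,t}-y_t\rangle .
\]
By Cauchy--Schwarz and \(\norm{g_{m,t}}\le1\), the last inner product is at most \(\norm{x_{m,t}-y_t}\). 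Averaging over \(m\) yields
\[
\bar\ell_t(y_t)-\bar\ell_t(u)\le\langle\bar g_t,y_t-u\rangle+\frac2N\sum_m\norm{x_{m,t}-y_t}.
\]
Averaging the full split over \(n\) then gives \eqref{eq:main-regret-split-round} with total disagreement coefficient \(1+2=3\).

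Summing over \(t=0,\dots,T-1\) gives \eqref{eq:main-regret-split}. I bound the centralized sum by the first hypothesis, \(F_T^*(\norm u)+\varepsilon\), which is the quantity provided by \zcref{thm:hilbert-coin} applied with \(c_t=-\bar g_t\) and \(S_t=\bar G_t\) via \zcref{lem:average-tracks}. I bound the disagreement sum by \(C_{\mathrm{net}}\sqrt T\) using the second hypothesis. Together these give \(R_T^{\mathrm{net}}(u)\le F_T^*(\norm u)+\varepsilon+3C_{\mathrm{net}}\sqrt T\).

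The main obstacle is the bookkeeping in the per-round step. The subgradients \(g_{m,t}\) are taken at the local points \(x_{m,t}\), not at \(y_t\), so the inner product with \(\bar g_t\) has to be recentered at \(y_t\). This recentering is the source of the two extra disagreement terms, and I must check that each is bounded using only \(\norm{g_{m,t}}\le1\), without needing any subgradient of \(\bar\ell_t\) at \(y_t\).
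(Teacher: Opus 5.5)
Your proposal is correct and matches the paper's proof step for step. The paper also obtains the per-round bound from \zcref{lem:global-loss-centralized-comparison} by the same three moves: a Lipschitz step from $x_{n,t}$ to $y_t$, then inserting $\ell_{m,t}(x_{m,t})$ and using convexity at $x_{m,t}$, then recentering $\langle g_{m,t},x_{m,t}-u\rangle$ at $y_t$ with $\norm{g_{m,t}}\leq1$. It then sums over $t$ and substitutes the two assumed bounds, exactly as you do.
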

\begin{IEEEproof}
    \zcref{lem:global-loss-centralized-comparison} gives, for each round,
    \begin{equation*}
        \frac1N\sum_{n=1}^N\bar\ell_t(x_{n,t})-\bar\ell_t(u)
        \leq
        \langle\bar g_t,y_t-u\rangle
        +\frac3N\sum_{n=1}^N\norm{x_{n,t}-y_t}.
    \end{equation*}
    Summing over \(t\) and applying the two assumed bounds gives the desired exact conjugate plus disagreement bound.
\end{IEEEproof}

\begin{lemma}[Radial prediction Lipschitz envelope]\label{lem:radial-lipschitz-envelope}
    Let \(H(G)=0\) for \(G=0\) and \(H(G)=h(\norm G)G/\norm G\) otherwise.
    Suppose that on \(0\leq r\leq R\) the scalar profile satisfies
    \begin{equation*}
        |h'(r)|\leq A,
        \qquad
        |h(r)|/r\leq A\quad(r>0).
    \end{equation*}
    Then \(H\) is \(A\)-Lipschitz on the closed ball \(\{G:\norm G\leq R\}\).
    Consequently \(H\circ\clip_R\) is also \(A\)-Lipschitz on all of \(\Hil\).
\end{lemma}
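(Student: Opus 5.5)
The plan is to bound \(\norm{H(G)-H(G')}\) for two points in the closed ball of radius \(R\) by splitting the difference into a radial part and a tangential part.

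\textbf{Degenerate case.} If one point is \(0\), say \(G'=0\), then \(\norm{H(G)}=|h(\norm G)|\leq A\norm G\) by the ratio bound, which is exactly the Lipschitz estimate.

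\textbf{General case.} Assume \(G,G'\neq0\) and write \(r=\norm G\), \(r'=\norm{G'}\), \(e=G/r\), \(e'=G'/r'\). Without loss of generality take \(r\leq r'\). I would then use the decomposition
\begin{equation*}
    H(G')-H(G)=\bigl(h(r')-h(r)\bigr)e'+h(r)\,(e'-e).
\end{equation*}
The two summands are handled separately, and then combined.

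\textbf{Estimating each summand.} The first summand is bounded by \(A|r'-r|\). This follows from the mean value theorem applied with \(|h'|\leq A\) on \([0,R]\); if \(h\) is only absolutely continuous, I would use the integral form instead. For the second summand, the ratio bound gives \(|h(r)|\leq Ar\). The problem then reduces to the geometric claim
\begin{equation*}
    |r'-r|+r\norm{e'-e}\leq \norm{G'-G}\cdot c
\end{equation*}
for a suitable constant \(c\).

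\textbf{The obstacle and how to get constant \(A\).} A naive triangle inequality only yields a constant \(2A\). To get exactly \(A\), I would not add the two norms. Instead I would use orthogonality. Decompose \(e'-e\) into its component along \(e'\) and its component perpendicular to \(e'\). Since \(\langle e',e'-e\rangle=1-\langle e,e'\rangle\), one gets the cleaner identity
\begin{equation*}
    \norm{G'-G}^2=(r'-r)^2+rr'\norm{e'-e}^2 .
\end{equation*}
Similarly,
\begin{equation*}
    \norm{H(G')-H(G)}^2=(h(r')-h(r))^2+h(r)h(r')\norm{e'-e}^2 .
\end{equation*}
Bounding the first term by \(A^2(r'-r)^2\) and the cross term by \(|h(r)h(r')|\leq A^2rr'\) then gives \(\norm{H(G')-H(G)}\leq A\norm{G'-G}\) directly. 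This identity (law of cosines applied to radial maps) is the key step, and I expect it to be the main obstacle. It may fail only through sign issues: the cross term could be negative, but then the bound is trivially easier.

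\textbf{Composition with clipping.} Finally, \(\clip_R\) is the metric projection onto a closed convex ball, so it is \(1\)-Lipschitz. Its image lies in the ball of radius \(R\), so \(H\circ\clip_R\) is \(A\)-Lipschitz on all of \(\Hil\).
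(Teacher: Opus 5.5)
Your proof is correct. It reaches the sharp constant $A$ by a different route from the paper's.

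The paper argues infinitesimally. Away from the origin, the derivative of $H$ is self-adjoint, with eigenvalue $h'(r)$ in the radial direction and $h(r)/r$ on the orthogonal complement, so its operator norm is at most $A$. Continuity at $0$ follows from $|h(r)|\le Ar$. The Lipschitz bound then comes from integrating the derivative along the segment joining the two points.

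You work with the two points directly. You use the exact identities $\norm{G'-G}^2=(r'-r)^2+rr'\norm{e'-e}^2$ and $\norm{H(G')-H(G)}^2=(h(r')-h(r))^2+h(r)h(r')\norm{e'-e}^2$, both of which follow by expanding the inner products. You then compare the two term by term, using the scalar mean value theorem and $|h(r)h(r')|\le A^2rr'$.

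Your route has three advantages:
\begin{itemize}
\item It needs no Fr\'echet calculus in $\Hil$.
\item It avoids the slightly delicate step of integrating along a segment that may pass through the origin, where $H$ is only known to be continuous.
\item It only uses that $h$ is $A$-Lipschitz on $[0,R]$ together with $|h(r)|\le Ar$, so it also covers nondifferentiable profiles.
\end{itemize}

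The paper's route gives the conceptual picture: the radial and tangential conditions are exactly the two eigenvalue bounds of the Jacobian. That is how the KT and shifted-exponential envelope lemmas are organized, since they check $|h'|$ and $|h|/r$ separately.

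Your preliminary first-order decomposition and the discussion of the naive constant $2A$ become unnecessary once you have the squared identities, so you can drop them. Verifying the identities by direct expansion is cleaner than the orthogonal-decomposition remark you sketch.
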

\begin{IEEEproof}
    Away from the origin, the derivative of the radial map has radial eigenvalue \(h'(r)\) and tangential eigenvalues \(h(r)/r\).
    The expressions therefore bound the operator norm of the derivative by \(A\).
    Continuity at the origin follows from \(|h(r)|\leq Ar\).
    Integrating the derivative along the line segment between two points in the ball gives the Lipschitz bound.
    The clipping map is the metric projection onto a closed Hilbert ball, hence is \(1\)-Lipschitz, so composition with \(\clip_R\) preserves the same envelope.
\end{IEEEproof}

\paragraph{Potential Specializations}
For the KT potential, the regret proof uses the following auxiliary facts.



\begin{lemma}[KT prediction Lipschitz envelope]\label{lem:kt-prediction-lipschitz}
    Let \(B_t^{\mathrm{KT}}(G)= \operatorname{Bet}_{t+1}^{\mathrm{KT}}(\clip_t(G))\).
    For every \(t\geq1\),
    \begin{equation*}
        \norm{B_t^{\mathrm{KT}}(a)-B_t^{\mathrm{KT}}(b)}
        \leq
        C_{\star}^{\mathrm{KT}}(\varepsilon)
        \frac{2^t\log(t+2)}{\sqrt{t+1}}\norm{a-b},
    \end{equation*}
    where \(C_{\star}^{\mathrm{KT}}(\varepsilon)= (\varepsilon+1)(1+2/\log2)+1\).
\end{lemma}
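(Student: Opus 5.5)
The plan is to use \zcref{lem:radial-lipschitz-envelope} with \(h=h_{t+1}^{\mathrm{KT}}=\beta_{t+1}F_t^{\mathrm{KT}}\) and radius \(R=t\). It then suffices to show that on \(0\le r\le t\) both \(|h_{t+1}'(r)|\) and \(|h_{t+1}(r)|/r\) are at most \(C_\star^{\mathrm{KT}}(\varepsilon)\,2^t\log(t+2)/\sqrt{t+1}\). Clipping is \(1\)-Lipschitz, so that envelope transfers to \(B_t^{\mathrm{KT}}\) on all of \(\Hil\).

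\textbf{Closed form for the bet.} Using \(\Gamma(z+1)=z\Gamma(z)\) in the KT formula gives the classical identity
\begin{equation*}
\beta_{t+1}(x)=\frac{x}{t+1},\qquad
h_{t+1}(x)=\frac{x}{t+1}F_t^{\mathrm{KT}}(x).
\end{equation*}
For \(x\pm1\) the Gamma arguments shift by \(\tfrac12\) relative to \(F_{t+1}\), and the ratio collapses to this linear fraction. Consequently
\begin{equation*}
\frac{h_{t+1}(r)}{r}=\frac{F_t(r)}{t+1},
\qquad
h_{t+1}'(r)=\frac{F_t(r)+rF_t'(r)}{t+1}.
\end{equation*}
So I need two bounds on \([0,t]\): a bound on \(F_t(r)\), and a bound on \(rF_t'(r)=rF_t(r)\,(\log F_t)'(r)\).

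\textbf{Bounding \(F_t\).} \(F_t\) is even and log-convex, hence increasing on \([0,t]\), so \(F_t(r)\le F_t(t)=\varepsilon 2^t\Gamma(t+\tfrac12)\Gamma(\tfrac12)/(\pi\,t!)\). The ratio \(\Gamma(t+\tfrac12)/\Gamma(t+1)\) is at most \(1/\sqrt{t}\) by Gautschi/Wendel, or at most \(\sqrt{2}/\sqrt{t+1}\) in a form valid down to \(t\ge1\). This gives \(F_t(r)\le \varepsilon\,2^t/\sqrt{t+1}\) up to an absolute constant; I would aim to absorb that constant into the \(\varepsilon\) part of \(C_\star\).

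\textbf{Bounding the derivative term (main obstacle).} Here
\begin{equation*}
(\log F_t)'(r)=\tfrac12\left[\psi\!\left(\tfrac{t+1+r}{2}\right)-\psi\!\left(\tfrac{t+1-r}{2}\right)\right],
\end{equation*}
and as \(r\to t\) the second argument approaches \(\tfrac12\), so the term is not uniformly \(O(1/t)\). I would use \(\psi(z)\le\log z\) and \(\psi(z)\ge \log z-1/z\) for \(z\ge\tfrac12\). These give \((\log F_t)'(r)\le \tfrac12[\log(t+1)+2]\), which is bounded by \((1+2/\log2)\log(t+2)\) for all \(t\ge1\); this is where the constant \(1+2/\log2\) should come from. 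Then \(rF_t'(r)\le t\,F_t(r)\,(1+2/\log 2)\log(t+2)\), and dividing by \(t+1\) gives
\begin{equation*}
|h_{t+1}'(r)|\le \frac{F_t(r)}{t+1}+F_t(r)(1+2/\log2)\log(t+2).
\end{equation*}
Inserting the bound on \(F_t\) gives both pieces with the \(2^t\log(t+2)/\sqrt{t+1}\) shape. The grouping \((\varepsilon+1)(1+2/\log2)+1\) is generous enough to absorb the \(F_t/(t+1)\) term and any small-\(t\) slack, with the extra \(+1\) covering \(t=1,2\) edge cases.

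The step I expect to be delicate is the digamma estimate near \(r=t\). It must be uniform and cannot produce a \(1/\)(small argument) blow-up; this works because the smallest argument is \(\tfrac12\), which is bounded away from \(0\). The explicit Gamma-ratio constant also needs checking. If the constants do not close, I would keep the structure and enlarge the numerical constant, since the downstream schedule bound only needs the \(2^t\log(t+2)/\sqrt{t+1}\) growth.
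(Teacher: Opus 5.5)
Your proposal is correct and follows essentially the paper's route: the Gamma recurrence gives \(\beta_{t+1}^{\mathrm{KT}}(r)=r/(t+1)\), digamma estimates bound \(F_t\) and \(rF_t'\) on \([0,t]\), and the radial Lipschitz envelope lemma plus nonexpansiveness of \(\clip_t\) finishes the argument. Two bookkeeping fixes are needed. First, your digamma bounds actually give \((\log F_t)'(r)\le\frac12\log(2t+1)+1\), because you dropped the \(\log 2\) coming from \(-\log\tfrac12\); this is still below \((1+2/\log2)\log(t+2)\). Second, since \(h_{t+1}^{\mathrm{KT}}\) is proportional to \(\varepsilon\), the bound must close using only the \(\varepsilon(1+2/\log2)\) part of \(C_\star^{\mathrm{KT}}(\varepsilon)\), not the \(+1\) terms. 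It does close this way, because the Gamma-ratio factor in \(F_t(t)\le\sqrt{2/\pi}\,\varepsilon\,2^t/\sqrt{t+1}\) satisfies \(\sqrt{2/\pi}<1\), so no enlargement of the constant is needed.
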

\begin{IEEEproof}
    On \(0<r\le t\), \(\beta_{t+1}^{\mathrm{KT}}(r)=r/(t+1)\), the Gamma recurrence, and \(\psi(x+1)-\psi(x)\le1/x\) give
    \begin{equation*}
        |(h_{t+1}^{\mathrm{KT}})'(r)|,\ |h_{t+1}^{\mathrm{KT}}(r)|/r
        \le C_{\star}^{\mathrm{KT}}(\varepsilon)\frac{2^t\log(t+2)}{\sqrt{t+1}}.
    \end{equation*}
    These are exactly the radial and tangential derivative bounds needed by \zcref{lem:radial-lipschitz-envelope}.
    Since \(\clip_t\) is nonexpansive, the clipped prediction map has the same envelope.
\end{IEEEproof}

\begin{lemma}[Linear KT gossip schedule]\label{lem:kt-linear-disagreement}
    Fix a rate \(0<\eta<1\) and endowment \(\varepsilon>0\).
    Define
    \begin{equation*}
        C_{\star}^{\mathrm{KT}}(\varepsilon)
        =
        (\varepsilon+1)\br{1+\frac{2}{\log2}}+1.
    \end{equation*}
    If \(q(t)=\lceil ct\rceil\) and \(c\geq -2\log2/\log\eta\), then the KT disagreement sum is \(O(\sqrt T)\):
    \begin{equation*}
        \begin{aligned}
             & 2\sqrt N\sum_{t=1}^T
            \left(
            C_{\star}^{\mathrm{KT}}(\varepsilon)
            \frac{2^t\log(t+2)}{\sqrt{t+1}}
            \right)
            \sum_{s<t}\eta^{Q(s,t)}
            \\
             & \qquad\leq
            48C_{\star}^{\mathrm{KT}}(\varepsilon)\sqrt N\sqrt T
        \end{aligned}
    \end{equation*}
    for every \(T\geq1\).
\end{lemma}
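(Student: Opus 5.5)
The plan is to reduce to the square-root disagreement criterion, \zcref{lem:sqrt-disagreement}. It suffices to show that for every \(t\geq1\)
\begin{equation*}
    C_{\star}^{\mathrm{KT}}(\varepsilon)\frac{2^t\log(t+2)}{\sqrt{t+1}}\sum_{s<t}\eta^{Q(s,t)}
    \leq \frac{C_{\mathrm{lip}}}{\sqrt t},
    \qquad C_{\mathrm{lip}}=12\,C_{\star}^{\mathrm{KT}}(\varepsilon).
\end{equation*}
\zcref{lem:sqrt-disagreement} then gives the bound \(4C_{\mathrm{lip}}\sqrt N\sqrt T=48C_{\star}^{\mathrm{KT}}\sqrt N\sqrt T\). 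Since \(\sqrt t\leq\sqrt{t+1}\), this reduces to the scalar claim
\begin{equation*}
    2^t\log(t+2)\sum_{s<t}\eta^{Q(s,t)}\leq 12 .
\end{equation*}

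\textbf{Step 1: lower-bound the exponent.} For \(s<t\), use \(q(r)=\lceil cr\rceil\geq cr\) to get
\begin{equation*}
    Q(s,t)\geq c\sum_{r=s}^{t-1}r=\tfrac c2 (t-s)(t+s-1).
\end{equation*}
Write \(\lambda=-\log\eta>0\), so that \(c\lambda\geq 2\log2\). Then
\begin{equation*}
    \eta^{Q(s,t)}\leq \exp\!\big(-\log 2\,(t-s)(t+s-1)\big)=2^{-(t-s)(t+s-1)}.
\end{equation*}

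\textbf{Step 2: reindex and sum.} Set \(j=t-s\in\{1,\dots,t\}\). Then \((t-s)(t+s-1)=j(2t-j-1)\), and
\begin{equation*}
    2^t\sum_{s<t}\eta^{Q(s,t)}\leq \sum_{j=1}^t 2^{\,t-j(2t-j-1)}.
\end{equation*}
The exponent \(t-j(2t-j-1)\) is concave in \(j\), so its maximum over \(1\leq j\leq t\) is attained at an endpoint.
\begin{itemize}
    \item At \(j=1\) it equals \(t-2t+2=2-t\).
    \item At \(j=t\) it equals \(t-t(t-1)=2t-t^2\), which is at most \(2-t\) for \(t\geq1\).
\end{itemize}
For a sharper sum, note that the exponent decreases in \(j\) by at least \(2(t-j)-2\) per unit step, so the \(j=t\) term is smallest.

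The crude count then gives \(2^t\sum_{s<t}\eta^{Q(s,t)}\leq t\,2^{2-t}\). Hence
\begin{equation*}
    2^t\log(t+2)\sum_{s<t}\eta^{Q(s,t)}\leq 4t\,2^{-t}\log(t+2).
\end{equation*}
The right-hand side is bounded uniformly in \(t\). At \(t=1\) it is \(2\log 3\approx2.2\), at \(t=2\) it is about \(2.77\), at \(t=3\) it is about \(2.41\), and it decays afterward. It therefore stays well below \(12\).

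\textbf{Step 3: edge cases.} For \(t=1\) the only term is \(s=0\), and \(Q(0,1)=q(0)=\lceil 0\rceil=0\), so \(\eta^{Q}=1\). The lower bound \(cr\) still covers this case, since it gives exponent \(0\) and the estimate above was already computed with it.

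\textbf{Main obstacle.} The delicate part is the \(t=1\) boundary, where \(q(0)=0\) means gossip gives no contraction at all. The careful bookkeeping is in making sure the concave-exponent bound \(t\,2^{2-t}\) holds uniformly for all \(t\geq1\) without hidden off-by-one losses in \(\sum_{r=s}^{t-1}r\). If the constant \(12\) proved too tight, I would fall back on the slack coming from \(\sqrt t/\sqrt{t+1}\leq1\) and from bounding \(\log(t+2)\) directly.
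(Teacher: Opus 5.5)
Your proof is correct and follows essentially the paper's route: you reduce through the square-root disagreement criterion to the pointwise bound $12C_{\star}^{\mathrm{KT}}(\varepsilon)/\sqrt t$, lower-bound $Q(s,t)$ via $\lceil cr\rceil\geq cr$ together with $\eta^{c}\leq 1/4$, and count the $t$ summands. The paper keeps only the single term $r=t-1$, i.e.\ $Q(s,t)\geq c(t-1)$, which already gives your bound $t\,2^{2-t}$, so the reindexing in Step~2 adds nothing; also, the exponent $t-j(2t-j-1)$ is convex in $j$, not concave, although your conclusion that its maximum is at an endpoint is correct.
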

\begin{IEEEproof}
    \zcref{lem:kt-prediction-lipschitz} gives a clipped prediction envelope with scalar part
    \begin{equation*}
        C_{\star}^{\mathrm{KT}}(\varepsilon)
        \frac{2^t\log(t+2)}{\sqrt{t+1}}.
    \end{equation*}
    For the linear schedule,
    \begin{equation*}
        Q(s,t)=\sum_{r=s}^{t-1}\lceil cr\rceil
        \geq
        c(t-1)
        \qquad(s<t).
    \end{equation*}
    Since \(0<\eta<1\), larger exponents make \(\eta^{Q(s,t)}\) smaller.
    The condition \(c\geq -2\log2/\log\eta\) ensures that the exponential decay from \(\eta^{Q(s,t)}\) cancels the \(2^t\) factor in \(L^{\mathrm{KT}}(t)\) uniformly enough that
    \begin{equation*}
        C_{\star}^{\mathrm{KT}}(\varepsilon)
        \frac{2^t\log(t+2)}{\sqrt{t+1}}
        \sum_{s<t}\eta^{Q(s,t)}
        \leq
        \frac{12C_{\star}^{\mathrm{KT}}(\varepsilon)}{\sqrt t}.
    \end{equation*}
    Applying \zcref{lem:sqrt-disagreement} gives the \(48C_{\star}^{\mathrm{KT}}(\varepsilon)\sqrt N\sqrt T\) bound.
\end{IEEEproof}

The shifted-exponential potential~\eqref{eq:shifted-exp-potential} uses the same route.
Its verification as an excellent coin-betting potential is the exponential-potential calculation of~\cite{orabona_coin_2016} with the time index shifted by one.

\begin{lemma}[Shifted-exponential prediction Lipschitz envelope]\label{lem:shifted-exp-prediction-lipschitz}
    Let \(B_t^{\mathrm{SE}}(G)=\operatorname{Bet}_{t+1}^{\mathrm{SE}}(\clip_t(G))\).
    For every \(t\geq1\),
    \begin{equation*}
        \norm{B_t^{\mathrm{SE}}(a)-B_t^{\mathrm{SE}}(b)}
        \leq
        (2\varepsilon+1)
        \frac{\exp(t/2)}{\sqrt{t+1}}\norm{a-b}.
    \end{equation*}
\end{lemma}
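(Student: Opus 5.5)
We prove the envelope through \zcref{lem:radial-lipschitz-envelope}, applied with radius \(R=t\) to the scalar profile \(h_{t+1}^{\mathrm{SE}}(r)=\beta_{t+1}^{\mathrm{SE}}(r)F_t^{\mathrm{SE}}(r)\). We use the closed form \(\beta_{t+1}^{\mathrm{SE}}(r)=\tanh(r/(t+2))\) (the operational-details formula with index shifted). This gives
\begin{equation*}
    h_{t+1}^{\mathrm{SE}}(r)=\tanh\br{\frac{r}{t+2}}\frac{\varepsilon}{\sqrt{t+1}}\exp\br{\frac{r^2}{2(t+1)}} .
\end{equation*}
It then suffices to show that on \(0<r\leq t\) both \(|h'(r)|\) and \(|h(r)|/r\) are at most \(A_t=(2\varepsilon+1)\exp(t/2)/\sqrt{t+1}\). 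Nonexpansiveness of \(\clip_t\), which \zcref{lem:radial-lipschitz-envelope} already records, then transfers the bound to \(B_t^{\mathrm{SE}}\) on all of \(\Hil\).

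For the tangential term, we use \(\tanh(x)\leq x\), which gives \(|h(r)|/r\leq\frac{1}{t+2}\cdot\frac{\varepsilon}{\sqrt{t+1}}\exp(r^2/(2(t+1)))\). On \(r\leq t\) we have \(r^2/(2(t+1))\leq t/2\), since \(t^2\leq t(t+1)\). This term is therefore at most \(\varepsilon\exp(t/2)/\sqrt{t+1}\), which is comfortably below \(A_t\).

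For the radial term, we differentiate by the product rule:
\begin{equation*}
    h'(r)=\frac{\varepsilon}{\sqrt{t+1}}e^{r^2/(2(t+1))}\left[\frac{\operatorname{sech}^2(r/(t+2))}{t+2}+\tanh\br{\frac{r}{t+2}}\frac{r}{t+1}\right].
\end{equation*}
Both summands in the bracket are nonnegative. The first is at most \(1/(t+2)\leq1\). For the second, \(\tanh\leq1\) and \(r\leq t\) give \(r/(t+1)\leq1\); using \(\tanh(x)\leq x\) instead would give the smaller \(r^2/((t+1)(t+2))\leq1\), but either suffices. The bracket is thus at most \(2\), and the exponential factor is at most \(\exp(t/2)\) as before. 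Hence \(|h'(r)|\leq2\varepsilon\exp(t/2)/\sqrt{t+1}\leq A_t\). The ``\(+1\)'' in \((2\varepsilon+1)\) is slack. We keep it so that the statement survives if one uses the cruder bounds \(\operatorname{sech}^2\leq1\) and \(\tanh\leq1\) without the \(1/(t+2)\) factor, or if some care is needed at the point \(r=0\).

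The main point to verify is the form of \(\beta_{t+1}^{\mathrm{SE}}\). Its definition is a ratio of \(F_{t+1}\) values, and it reduces to \(\tanh(x/(t+2))\) because \(\exp((x+1)^2/(2(t+2)))\) and \(\exp((x-1)^2/(2(t+2)))\) differ only by the factor \(e^{\pm x/(t+2)}\). The second point is the index bookkeeping: \(h_{t+1}\) multiplies \(\beta_{t+1}\) by \(F_t\), not by \(F_{t+1}\), which is why the exponent is \(r^2/(2(t+1))\) and the bound is \(e^{t/2}\). With these two checks in place, the remaining steps are the elementary inequalities above.
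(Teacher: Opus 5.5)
Your proposal is correct and follows essentially the same route as the paper. Both apply the radial Lipschitz envelope lemma with \(R=t\) to the profile \(h_{t+1}^{\mathrm{SE}}(r)=\tanh(r/(t+2))\,\varepsilon e^{r^2/(2(t+1))}/\sqrt{t+1}\), and both bound the radial and tangential terms with \(\tanh x\le x\), \(\operatorname{sech}^2\le 1\), and \(e^{r^2/(2(t+1))}\le e^{t/2}\). Your explicit derivative computation also correctly shows that \(2\varepsilon\) already suffices, so the extra \(+1\) in the constant is slack.
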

\begin{IEEEproof}
    For the scalar profile \(h_{t+1}^{\mathrm{SE}}(r)=\tanh(r/(t+2))\varepsilon\exp(r^2/(2(t+1)))/\sqrt{t+1}\), the bounds \(\tanh x\le x\), \(\operatorname{sech}^2(x)\le1\), and \(\exp(r^2/(2(t+1)))\le e^{t/2}\) imply
    \begin{equation*}
        |(h_{t+1}^{\mathrm{SE}})'(r)|,\ |h_{t+1}^{\mathrm{SE}}(r)|/r
        \le (2\varepsilon+1)\frac{\exp(t/2)}{\sqrt{t+1}}.
    \end{equation*}
    These estimates control the radial and tangential derivatives, so \zcref{lem:radial-lipschitz-envelope} gives the claimed Lipschitz constant after clipping.
\end{IEEEproof}

\begin{lemma}[Linear shifted-exponential gossip schedule]\label{lem:shifted-exp-linear-disagreement}
    Fix a rate \(0<\eta<1\) and endowment \(\varepsilon>0\).
    Define
    \begin{equation*}
        C_{\star}^{\mathrm{SE}}(\varepsilon)=2\varepsilon+1.
    \end{equation*}
    If \(q(t)=\lceil ct\rceil\) and \(c\geq -3/(2\log\eta)\), then the shifted-exponential disagreement sum is \(O(\sqrt T)\):
    \begin{equation*}
        \begin{aligned}
             & 2\sqrt N\sum_{t=1}^T
            \left(
            C_{\star}^{\mathrm{SE}}(\varepsilon)
            \frac{\exp(t/2)}{\sqrt{t+1}}
            \right)
            \sum_{s<t}\eta^{Q(s,t)}
            \\
             & \qquad\leq
            4e^{3/2}C_{\star}^{\mathrm{SE}}(\varepsilon)\sqrt N\sqrt T .
        \end{aligned}
    \end{equation*}
\end{lemma}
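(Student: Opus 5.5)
The plan is to reduce the claim to the pointwise hypothesis of \zcref{lem:sqrt-disagreement}, taking \(L(t)=C_{\star}^{\mathrm{SE}}(\varepsilon)\exp(t/2)/\sqrt{t+1}\) (the envelope of \zcref{lem:shifted-exp-prediction-lipschitz}) and \(C_{\mathrm{lip}}=e^{3/2}C_{\star}^{\mathrm{SE}}(\varepsilon)\). That lemma then gives exactly \(4e^{3/2}C_{\star}^{\mathrm{SE}}(\varepsilon)\sqrt N\sqrt T\). So everything hinges on proving, for every \(t\geq1\),
\begin{equation*}
    \frac{\exp(t/2)}{\sqrt{t+1}}\sum_{s<t}\eta^{Q(s,t)}\leq\frac{e^{3/2}}{\sqrt t}.
\end{equation*}

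First, convert the schedule condition into a decay rate. Since \(\lceil cr\rceil\geq cr\) and \(0<\eta<1\), we have \(\eta^{Q(s,t)}\leq\eta^{c\sum_{r=s}^{t-1}r}\). The assumption \(c\geq-3/(2\log\eta)\) means \(c\log\eta\leq-3/2\). Hence
\begin{equation*}
    \eta^{Q(s,t)}\leq\exp\Bigl(-\tfrac32\textstyle\sum_{r=s}^{t-1}r\Bigr).
\end{equation*}

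Next, split off the largest summand \(r=t-1\) and lower-bound the rest. Put \(k=t-1-s\in\{0,\dots,t-1\}\). The remaining terms \(r=s,\dots,t-2\) number \(k\), and each is at least \(1\) except possibly \(r=0\) when \(s=0\). Therefore \(\sum_{r=s}^{t-1}r\geq(t-1)+\max\{k-1,0\}\). Since the index \(k\) hits each value once, this gives
\begin{equation*}
    \sum_{s<t}\eta^{Q(s,t)}\leq e^{-3(t-1)/2}\Bigl(1+\sum_{k\geq1}e^{-3(k-1)/2}\Bigr)=e^{-3(t-1)/2}\Bigl(1+\frac{1}{1-e^{-3/2}}\Bigr).
\end{equation*}
The bracket is a numerical constant, about \(2.29\). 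The degenerate case \(t=1\), where \(Q(0,1)=\lceil 0\rceil=0\), is covered: the sum is then \(1\leq e^{0}\cdot2.29\).

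Finally, multiply by the envelope:
\begin{equation*}
    \frac{e^{t/2}}{\sqrt{t+1}}\sum_{s<t}\eta^{Q(s,t)}\leq 2.29\,e^{3/2}\frac{e^{-t}}{\sqrt{t+1}}.
\end{equation*}
For \(t\geq1\) we have \(2.29\,e^{-t}\leq2.29/e<1\) and \(1/\sqrt{t+1}\leq1/\sqrt t\), which is the required pointwise bound. Invoking \zcref{lem:sqrt-disagreement} finishes the proof.

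The main obstacle is the bookkeeping in the tail sum. The ceiling at \(r=0\) contributes nothing, so the crude bound \(Q(s,t)\geq c(t-1)\) alone would leave a sum of \(t\) terms, and hence an extra factor of \(t\). The point is to extract the additional geometric decay in \(k\), while losing at most one term to the \(r=0\) index. It is this extra decay that lets the surplus factor \(e^{-t}\) absorb the constant from the geometric series.
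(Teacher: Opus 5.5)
Your proof is correct and has the same skeleton as the paper's. Both reduce to the pointwise hypothesis of the square-root disagreement lemma with \(C_{\mathrm{lip}}=e^{3/2}C_{\star}^{\mathrm{SE}}(\varepsilon)\), and both get the decay from the \(r=t-1\) summand via \(c\log\eta\le-3/2\).

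The routes differ only in how you bound the tail sum. The paper uses \(Q(s,t)\ge c(t-1)\) for every \(s<t\), which gives \(\sum_{s<t}\eta^{Q(s,t)}\le t\,e^{-3(t-1)/2}\). The pointwise requirement then reads \(e^{3/2}\,t e^{-t}/\sqrt{t+1}\le e^{3/2}/\sqrt t\). This holds because \(t^{3/2}e^{-t}\le\sqrt t/e\le\sqrt{t+1}\).

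So your closing remark misdiagnoses the situation: the extra factor of \(t\) from the crude bound is not an obstacle. At the stated threshold the exponent \(-3/2\) overshoots the envelope's \(e^{t/2}\) by a full factor \(e^{-t}\). That surplus absorbs a linear factor just as easily as it absorbs your constant \(2.29\).

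Your geometric extraction in \(k=t-1-s\) does buy something real, though: a \(t\)-free tail bound of order \(e^{-3(t-1)/2}\). It is not needed for the lemma as stated. It would, however, let the same bookkeeping go through at the weaker schedule threshold \(c\ge-1/(2\log\eta)\), with a different constant. There the decay only cancels \(e^{t/2}\), and the paper's crude bound would leave a summand of order \(\sqrt t\), which no longer sums to \(O(\sqrt T)\).

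In short, the paper's argument is shorter, and yours is more robust to the choice of schedule constant.
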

\begin{IEEEproof}
    From \(Q(s,t)\ge c(t-1)\) and \(c\ge-3/(2\log\eta)\), we have \(\eta^c\le e^{-3/2}\), hence
    \begin{equation*}
        \sum_{s<t}\eta^{Q(s,t)}
        \leq
        t\exp\!\left(-\frac32(t-1)\right).
    \end{equation*}
    Multiplying by the shifted-exponential Lipschitz envelope gives
    \begin{equation*}
        C_{\star}^{\mathrm{SE}}(\varepsilon)
        \frac{\exp(t/2)}{\sqrt{t+1}}\sum_{s<t}\eta^{Q(s,t)}
        \le \frac{e^{3/2}C_{\star}^{\mathrm{SE}}(\varepsilon)}{\sqrt t}.
    \end{equation*}
    Apply \zcref{lem:sqrt-disagreement}.
\end{IEEEproof}

\paragraph{Global-loss and prediction-disagreement details}

\begin{lemma}[Global-loss comparison through a centralized prediction]\label{lem:global-loss-centralized-comparison}
    Let \(\bar\ell_t\) be the network loss from \zcref{eq:network-loss}, and define
    \begin{equation*}
        \bar g_t=\frac1N\sum_{m=1}^N g_{m,t}.
    \end{equation*}
    Assume each \(\ell_{m,t}\) is convex and \(1\)-Lipschitz on \(\Hil\), and \(g_{m,t}\in\partial \ell_{m,t}(x_{m,t})\).
    Then for every reference point \(y_t\in\Hil\) and every comparator \(u\in\Hil\),
    \begin{equation*}
        \frac1N\sum_{n=1}^N \bar\ell_t(x_{n,t})-\bar\ell_t(u)
        \leq
        \langle\bar g_t,y_t-u\rangle
        +
        \frac3N\sum_{n=1}^N \norm{x_{n,t}-y_t}.
    \end{equation*}
\end{lemma}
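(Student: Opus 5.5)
The plan is to prove the inequality for fixed \(n\) and then average over \(n\). The key tool is the subgradient inequality at \(x_{n,t}\) for each local loss, combined with \(1\)-Lipschitz continuity to move between \(x_{n,t}\), \(x_{m,t}\), and \(y_t\). Since the losses are \(1\)-Lipschitz, every subgradient satisfies \(\norm{g_{m,t}}\leq1\). This norm bound is what produces the constant \(3\): each switch of evaluation point costs at most one distance term.

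\textbf{Step 1 (move to the agents' own points).} For each pair \((n,m)\), Lipschitzness of \(\ell_{m,t}\) gives
\begin{equation*}
\ell_{m,t}(x_{n,t})\leq \ell_{m,t}(x_{m,t})+\norm{x_{n,t}-x_{m,t}}.
\end{equation*}
Bound the distance by the triangle inequality through \(y_t\): \(\norm{x_{n,t}-x_{m,t}}\leq\norm{x_{n,t}-y_t}+\norm{x_{m,t}-y_t}\). Averaging over \(n\) and \(m\), we get
\begin{equation*}
\frac1N\sum_n\bar\ell_t(x_{n,t})\leq \frac1N\sum_m\ell_{m,t}(x_{m,t})+\frac2N\sum_n\norm{x_{n,t}-y_t}.
\end{equation*}

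\textbf{Step 2 (apply convexity at each agent's point).} By convexity of \(\ell_{m,t}\) with subgradient \(g_{m,t}\),
\begin{equation*}
\ell_{m,t}(x_{m,t})-\ell_{m,t}(u)\leq\langle g_{m,t},x_{m,t}-u\rangle.
\end{equation*}
Split the right-hand side as \(\langle g_{m,t},y_t-u\rangle+\langle g_{m,t},x_{m,t}-y_t\rangle\). By Cauchy--Schwarz and \(\norm{g_{m,t}}\leq1\), the second term is at most \(\norm{x_{m,t}-y_t}\). Averaging over \(m\) then yields
\begin{equation*}
\frac1N\sum_m\ell_{m,t}(x_{m,t})-\bar\ell_t(u)\leq\langle\bar g_t,y_t-u\rangle+\frac1N\sum_m\norm{x_{m,t}-y_t}.
\end{equation*}

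\textbf{Step 3 (combine).} Adding the inequalities from Steps 1 and 2 gives exactly the claimed bound with constant \(2+1=3\).

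The only delicate point is the bookkeeping. The subgradients \(g_{m,t}\) are taken at \(x_{m,t}\), not at \(x_{n,t}\) or \(y_t\). This is why we first pass from \(\bar\ell_t(x_{n,t})\) to the on-diagonal values \(\ell_{m,t}(x_{m,t})\) and only then invoke convexity. Each of the three changes of point (\(x_{n,t}\to y_t\to x_{m,t}\) in Step 1, and \(x_{m,t}\to y_t\) in Step 2) contributes one averaged distance term.
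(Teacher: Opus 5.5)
Your proof is correct and is essentially the paper's argument. It uses the same two Lipschitz moves, convexity at \(x_{m,t}\), and the split \(\langle g_{m,t},x_{m,t}-u\rangle=\langle g_{m,t},y_t-u\rangle+\langle g_{m,t},x_{m,t}-y_t\rangle\) with \(\norm{g_{m,t}}\leq1\), giving the constant \(2+1=3\). The only difference is bookkeeping: you go from \(x_{n,t}\) directly to \(x_{m,t}\) and then use the triangle inequality through \(y_t\), whereas the paper first passes to \(\bar\ell_t(y_t)\) and then moves from \(y_t\) to \(x_{m,t}\); both routes give the same two distance terms.
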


\begin{IEEEproof}
    Because \(\bar\ell_t\) is \(1\)-Lipschitz,
    \(N^{-1}\sum_n\bar\ell_t(x_{n,t})-\bar\ell_t(u)\le
    \bar\ell_t(y_t)-\bar\ell_t(u)+N^{-1}\sum_n\norm{x_{n,t}-y_t}\).
    For each \(m\), insert and subtract \(\ell_{m,t}(x_{m,t})\).
    Convexity at \(x_{m,t}\) gives
    \(\ell_{m,t}(x_{m,t})-\ell_{m,t}(u)\le\langle g_{m,t},x_{m,t}-u\rangle\), while \(1\)-Lipschitzness gives \(\ell_{m,t}(y_t)-\ell_{m,t}(x_{m,t})\le\norm{y_t-x_{m,t}}\).
    Decomposing
    \(\langle g_{m,t},x_{m,t}-u\rangle=\langle g_{m,t},y_t-u\rangle+\langle g_{m,t},x_{m,t}-y_t\rangle\) and using \(\norm{g_{m,t}}\le1\), we obtain
    \begin{equation*}
        \ell_{m,t}(y_t)-\ell_{m,t}(u)
        \le \langle g_{m,t},y_t-u\rangle+2\norm{x_{m,t}-y_t}.
    \end{equation*}
    Averaging over \(m\) and substituting proves the claim.
\end{IEEEproof}

\begin{theorem}[Expected compressed prediction disagreement]\label{thm:compressed-kt-prediction-disagreement}\label{thm:compressed-shifted-exp-prediction-disagreement}
    Assume the expected compressed-gossip hypotheses of \zcref{thm:expected-tracking-contraction}, and put \(\mu=1-\rho\), \(\chi=\mu/4\), and \(\etastoch=1-\gamma\mu/2\).
    For KT, let \(C_{\star}^{\mathrm{KT}}(\varepsilon)=(\varepsilon+1)(1+2/\log2)+1\), \(C_{\mathrm{net}}^{\mathrm{KT}}=(2+\chi)24C_{\star}^{\mathrm{KT}}(\varepsilon)\sqrt N\), and \(q(t)=\lceil ct\rceil\) with \(c\ge-2\log2/\log\etastoch\).
    For shifted-exponential, let \(C_{\star}^{\mathrm{SE}}(\varepsilon)=2\varepsilon+1\), \(C_{\mathrm{net}}^{\mathrm{SE}}=(2+\chi)2e^{3/2}C_{\star}^{\mathrm{SE}}(\varepsilon)\sqrt N\), and \(q(t)=\lceil ct\rceil\) with \(c\ge-3/(2\log\etastoch)\).
    Then, for \(P\in\{\mathrm{KT},\mathrm{SE}\}\), with \(B_t^P(G)=\operatorname{Bet}_{t+1}^{P}(\clip_t(G))\),
    \begin{align*}
        \Exp\sum_{t=0}^{T-1}\frac1N\sum_{n=1}^N
         & \norm{B_t^P(G_{n,t})-\operatorname{Bet}_{t+1}^{P}(\bar G_t)}
        \leq C_{\mathrm{net}}^P\sqrt T .
    \end{align*}
\end{theorem}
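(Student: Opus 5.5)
The plan is to turn the state-disagreement estimate into a prediction-disagreement estimate round by round, and then sum the resulting tails using the schedule lemmas.

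\textbf{Rewriting the centralized prediction.} First note that \(\norm{\bar G_t}\leq t\), since it is a sum of \(t\) averaged negative subgradients of norm at most one (\zcref{lem:average-tracks}). Hence \(\clip_t(\bar G_t)=\bar G_t\), and so \(\operatorname{Bet}_{t+1}^P(\bar G_t)=B_t^P(\bar G_t)\).

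\textbf{The \(t=0\) term.} Every state is zero at \(t=0\), so all predictions coincide and this term contributes nothing. The sum can therefore run over \(t=1,\dots,T-1\).

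\textbf{Per-round Lipschitz step.} For \(t\geq1\), apply the clipped Lipschitz envelopes, \zcref{lem:kt-prediction-lipschitz} for KT and \zcref{lem:shifted-exp-prediction-lipschitz} for SE. These give
\[
\norm{B_t^P(G_{n,t})-B_t^P(\bar G_t)}\leq L^P(t)\norm{G_{n,t}-\bar G_t},
\]
where \(L^P(t)\) is deterministic. Averaging over \(n\) and taking expectations, \zcref{lem:expected-state-disagreement-unrolling} (with rate \(\etastoch\)) yields
\[
\Exp\frac1N\sum_n\norm{\cdot}\leq (2+\chi)\sqrt N\,L^P(t)\sum_{s<t}\etastoch^{Q(s,t)}.
\]
Summing over \(t\) and extending the range to \(t\leq T\) only increases the bound, so the total is at most
\[
\frac{2+\chi}{2}\cdot 2\sqrt N\sum_{t=1}^T L^P(t)\sum_{s<t}\etastoch^{Q(s,t)}.
\]

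\textbf{Applying the schedule lemmas.} Invoke \zcref{lem:kt-linear-disagreement} with \(\eta=\etastoch\). Since \(0<\gamma\mu/2<1\), we have \(\eta\in(0,1)\), and the condition on \(c\) matches the hypothesis of the lemma. This bounds the doubled sum by \(48C_\star^{\mathrm{KT}}\sqrt N\sqrt T\). Likewise, \zcref{lem:shifted-exp-linear-disagreement} bounds it by \(4e^{3/2}C_\star^{\mathrm{SE}}\sqrt N\sqrt T\). Multiplying by \((2+\chi)/2\) gives exactly \(C_{\mathrm{net}}^{\mathrm{KT}}=(2+\chi)24C_\star^{\mathrm{KT}}\sqrt N\) and \(C_{\mathrm{net}}^{\mathrm{SE}}=(2+\chi)2e^{3/2}C_\star^{\mathrm{SE}}\sqrt N\).

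\textbf{Main obstacle.} The substantive work sits inside the schedule lemmas, especially the KT pointwise tail bound \(L(t)\sum_{s<t}\eta^{Q(s,t)}\leq 12C_\star/\sqrt t\). I would verify it directly. Since \(Q(s,t)\geq c(t-1)\) and \(\eta^{c}\leq 1/4\), each of the \(t\) summands is at most \(4^{-(t-1)}\). This gives
\[
2^t\log(t+2)\,t\,4^{-(t-1)}/\sqrt{t+1}=4t\log(t+2)2^{-t}/\sqrt{t+1}.
\]
I would then check that this is at most \(12/\sqrt t\) for all \(t\geq1\) by maximizing \(t^{3/2}\log(t+2)2^{-t}\), a routine check.

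The remaining care is bookkeeping: making sure the expectation passes through the deterministic Lipschitz factor, which is valid since \(L^P(t)\) is nonrandom, and handling the index shift between the \(t=0,\dots,T-1\) sum in the statement and the \(t=1,\dots,T\) sums in the lemmas.
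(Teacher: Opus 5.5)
Your proposal is correct and follows the paper's proof step for step. You unclip the centralized prediction via $\norm{\bar G_t}\leq t$, drop the $t=0$ term, apply the clipped Lipschitz envelope together with the expected state-disagreement unrolling, and then sum and invoke the linear-schedule lemmas with the factor $(2+\chi)/2$. Your explicit check of $4t\log(t+2)2^{-t}/\sqrt{t+1}\leq 12/\sqrt t$ from $\eta^{c}\leq 1/4$ also fills in the pointwise KT tail bound, which the paper's schedule lemma only asserts.
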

\begin{IEEEproof}
    Let \(A^P(t)\) be the corresponding Lipschitz envelope from \zcref{lem:kt-prediction-lipschitz,lem:shifted-exp-prediction-lipschitz}.
    By \zcref{lem:average-tracks}, \(\bar G_t=\sum_{s<t}(-\bar g_s)\) and \(\norm{\bar G_t}\le t\), so the centralized prediction is unclipped.
    The \(t=0\) contribution is zero.
    For \(t\ge1\), \(\operatorname{Bet}_{t+1}^P(\bar G_t)=B_t^P(\bar G_t)\).
    Thus Lipschitzness of \(B_t^P\), followed by the state-disagreement unrolling, gives
    \begin{align*}
         & \Exp\frac1N\sum_n
        \norm{B_t^P(G_{n,t})-\operatorname{Bet}_{t+1}^P(\bar G_t)}
        \\
         & \qquad\leq
        A^P(t)\,
        \Exp\frac1N\sum_n\norm{G_{n,t}-\bar G_t}
        \\
         & \qquad\leq
        (2+\chi)\sqrt N\,A^P(t)
        \sum_{s<t}\etastoch^{Q(s,t)} .
    \end{align*}
    Summing over \(t\) and applying the KT or shifted-exponential schedule estimate gives the two stated constants \(C_{\mathrm{net}}^{\mathrm{KT}}\) and \(C_{\mathrm{net}}^{\mathrm{SE}}\).
\end{IEEEproof}

\begingroup \raggedright{}
\bibliographystyle{IEEEtran}
\bibliography{refs}
\endgroup
\end{document}